\pgfplotsset{compat = newest}
\def\pgfplotsforeachtodomain@@.#1\relax{%
    \pgf@xa=0.#1pt %
    \ifdim\pgf@xa>0.01pt  
        \def\pgfplotsretval{}%
        \def\pgfplotsretvalb{}%
    \else
        \edef\pgfplotsretvalb{\the\c@pgf@counta}%
    \fi
}%
\newcommand*\Let[2]{\State #1 $\gets$ #2}
  \algnewcommand{\MyComment}[1]{}
  \algnewcommand{\MyComment}[1]{\Comment{#1}}
\DeclareMathOperator*{\argmin}{arg\,min}
\DeclarePairedDelimiter{\ceil}{\lceil}{\rceil}
\DeclarePairedDelimiter{\floor}{\lfloor}{\rfloor}
\DeclareMathOperator*{\avg}{avg}
\newcommand*{\SP}{\mathrm{SP}}
\newcommand*{\natnum}{\mathbb{N}}
\newcommand*{\natnumzero}{\mathbb{N}_0}
\newtheorem{definition}{Definition}
\newtheorem{theorem}{Theorem}
\newtheorem{proposition}[theorem]{Proposition}
\newtheorem{assumption}{Assumption}
\newcommand{\divides}{\mathrel{|}}
\newcommand{\Nb}{\mathbb{N}}
\newcommand{\Rb}{\mathbb{R}}
\newcommand{\OL}{\mathcal{O}}
\newcommand{\tta}{\liningnums{2}TA}
\newcommand{\oset}[3][0ex]{%
  \mathrel{\mathop{#3}\limits^{
    \vbox to#1{\kern-3\ex@
    \hbox{$\scriptstyle#2$}\vss}}}}
\newcommand{\pto}{\oset{p}\to}
\title{Two-Tailed Averaging}
\title{Two-Tailed Averaging: Anytime, Adaptive,\\Once-in-a-While Optimal Weight Averaging\\for Better Generalization}}
\author{\textit{G\'abor Melis} \\
  {\tt melisgl@google.com}\\
  DeepMind, UCL; London, UK
}
\begin{document}

\maketitle

\begin{abstract}
Tail Averaging improves on Polyak averaging's non-asymptotic behaviour by excluding a number of leading iterates of stochastic optimization from its calculations.
In practice, with a finite number of optimization steps and a learning rate that cannot be annealed to zero, Tail Averaging can get much closer to a local minimum point of the training loss than either the individual iterates or the Polyak average.
However, the number of leading iterates to ignore is an important hyperparameter, and starting averaging too early or too late leads to inefficient use of resources or suboptimal solutions.
Our work focusses on improving generalization, which makes setting this hyperparameter even more difficult, especially in the presence of other hyperparameters and overfitting.
Furthermore, before averaging starts, the loss is only weakly informative of the final performance, which makes early stopping unreliable.
To alleviate these problems, we propose an anytime variant of Tail Averaging intended for improving generalization not pure optimization, that has no hyperparameters and approximates the optimal tail at all optimization steps.
Our algorithm is based on two running averages with adaptive lengths bounded in terms of the optimal tail length, one of which achieves approximate optimality with some regularity.
Requiring only the additional storage for two sets of weights and periodic evaluation of the loss, the proposed Two-Tailed Averaging algorithm is a practical and widely applicable method for improving generalization.
\end{abstract}

\section{Introduction}

For the series of iterates produced by Stochastic Gradient Descent (SGD) \citep{robbins1985stochastic} to converge to a local minimum point of the training loss, the learning rate must be annealed to zero.
Polyak averaging \citep{polyak1992acceleration,ruppert1988efficient} improves on SGD and achieves a statistically optimal convergence rate by averaging all iterates to produce the final solution.
Tail or suffix averaging \citep{jain2018parallelizing,rakhlin2011making} takes this further and improves the non-asymptotic behaviour by dropping a number of leading iterates from the average, speeding up the decay of the effect of the initial state while allowing the learning rate to stay constant.
Both of these properties are advantageous in practice, where a finite number of optimization steps are taken, and because large learning rates may bias optimization towards flatter and wider minima, which improves generalization \citep{hochreiter1997flat,keskar2016large}.
Focussing on large learning rates and generalization, \citet{izmailov2018averaging} propose Stochastic Weight Averaging (SWA), which takes the same form as Tail Averaging but is motivated from an ensembling point of view.

Tail Averaging starts after a given number of optimization steps.
Setting this hyperparameter to minimize the training loss already poses some difficulties, which only become more pronounced and numerous in the context of generalization, our primary focus in this work.
\begin{itemize}
\item Triggering averaging too early is inefficient as the average must grow long to forget early weights.
\item Triggering averaging too late is inefficient as it does not use valuable information.
\item Tuning dependent hyperparameters becomes harder.
\item Early stopping is unreliable due to learning curves having a sudden drop at the onset of averaging.
\end{itemize}

Motivated by these problems, we propose the Two-Tailed Averaging algorithm with the following features:
\begin{itemize}
\item \emph{Anytime}: An estimate of the optimal tail is available at all optimization steps.
\item \emph{Adaptive}: It has no hyperparameters.
The number of weights averaged (the \emph{length} of the tail) is determined adaptively based on the evolution of generalization performance.
\item \emph{Optimal once in a while}: The tail length achieves near optimality regularly.
\end{itemize}

The algorithm is very easy to implement.
Its principal cost is the storage for a second running average, and it also performs more evaluations of generalization performance (e.g.\ the validation loss).
The main idea, sketched in \Cref{fig:schematic}, is to maintain two running averages of optimization iterates: a \emph{short} and a \emph{long} one, with the long average being our estimate of the optimal weights.

\begin{figure*}
\centering
\begin{tikzpicture}[xscale=1.8,yscale=0.7]
  \draw [{|[width=1mm]}-Circle,thick](-0.045,-0.077) -- (1,1) node[midway,above] {S=L};
  \draw [densely dotted, thick](0.955,0.795) -- (0.955,0.02);
  \draw [-{Circle[open]},thick](1,1) -- (2,2) node[midway,above] {L};
  \draw [densely dotted, thick](1.955,1.795) -- (1.955,0.02);
  \draw [{|[width=1mm]}-Circle,thick](0.955,-0.077) -- (2,1) node[midway,above] {S};
  \draw [-{Circle[open]},thick](2,1) -- (4,3) node[midway,above] {L};
  \draw [densely dotted, thick](3.955,2.795) -- (3.955,0.02);
  \draw [{|[width=1mm]}-Circle,thick](1.955,-0.077) -- (4,2) node[midway,above] {S};
  \draw [-{Circle[open]},thick](4,2) -- (7,5) node[midway,above] {L};
  \draw [densely dotted, thick](6.955,4.795) -- (6.955,0.02);
  \draw [{|[width=1mm]}-Circle,thick](3.955,-0.077) -- (7,3) node[midway,above] {S};
\end{tikzpicture}
\caption[Example evolution of the two running averages of weights over optimization steps.]{Example evolution of the two running averages of weights over optimization steps.
Line segments indicate running averages whose length (the number of optimization iterates averaged) is represented on the y axis.
The two averages start out the same, but after the first evaluation, there is always one short (S) and one long (L) average, where the long one has more iterates averaged and a better loss.
When the loss with the short one is not worse than with the long average, the long one is reset, and the short average becomes the long one.
These switch points are marked by dotted lines.
In any interval labeled with L, there is at least one point where the length of the long average is near optimal.}
\label{fig:schematic}
\end{figure*}
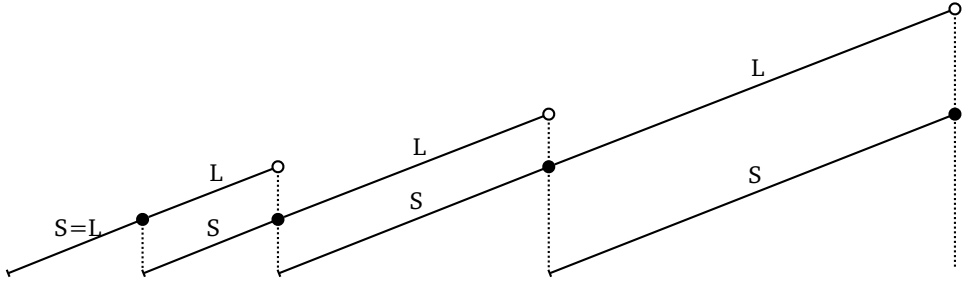

\section{Related Works}

\subsection{Averaging in Pure Optimization}

Polyak averaging as originally proposed \citep{ruppert1988efficient,polyak1992acceleration} computes the equally weighted average
$\bar{\theta}_t = \frac{1}{t+1}\sum_{i=0}^{t}\theta_i$
of all iterates $\theta_i$ from the optimizer up to the current time step $t$.
The convergence rate of $\bar{\theta}_t$ was analyzed in the convex case with an appropriately decaying learning rate.
Beyond this strictest interpretation, Polyak (or Polyak--Ruppert) averaging may refer to using $\bar{\theta}_t$ without the convexity assumption, without a decaying learning rate, or with another optimizer such as Adam \citep{kingma2014adam}.

In practice, where finite budget considerations override the asymptotic optimality guarantees offered by theory, Polyak averaging may refer to an exponential moving average (EMA) of the form
\begin{equation}
\label{eq:ema}
\begin{aligned}
\bar{\theta}_0 &= \theta_0,\\
\bar{\theta}_t &= (1-\beta_t)\theta_t + \beta_t\bar{\theta}_{t-1} & (t \geq 1),
\end{aligned}
\end{equation}
where $\beta_t < 1$ may be a constant near $1$ or it may be scheduled as in \citet{martens2020new}.
The idea here is to improve the rate of decay of the effect of the initial error by downweighting early iterates.

Tail Averaging (TA) \citep{jain2018parallelizing}, also known as Suffix Averaging \citep{rakhlin2011making}, considers a finite optimization budget of $n$ steps with a constant learning rate.
At the cost of introducing a hyperparameter $s$ to control the start of averaging, it improves the rate of decay of the effect of the initial error while obtaining near-minimax rates on the variance.
Tail Averaging is defined as
\begin{equation}
\label{eq:tail-averaging}
\begin{aligned}
\bar{\theta}_t &= \theta_t & \qquad (t < s),\\
\bar{\theta}_t &= \frac{1}{t-s+1} \sum_{i=s}^t \theta_i & \qquad (t \geq s).
\end{aligned}
\end{equation}
Alternatively, the number of iterates to average may change in proportion to the current time step:
\begin{equation*}
\begin{aligned}
\bar{\theta}_t &= \frac{1}{\ceil{ct}} \sum_{i=t+1 - \ceil{ct}}^t \theta_i & \qquad (t \geq s),
\end{aligned}
\end{equation*}
where $c \in (0,1)$ is a hyperparameter.
\citet{roux2019anytime} discusses how to approximate averages of this form without excessive storage needs but do not consider how to automatically adjust the length.

All in all, we have discussed a few representative averaging methods intended for pure optimization but often repurposed for improving generalization by tuning their hyperparameters.
Although there are interesting developments in this area \citep{shamir2013stochastic,lacoste2012simpler}, we now move on to the main focus of this work, averaging for improving generalization.

\subsection{Averaging for Generalization}

In work parallel to Tail Averaging, \citet{izmailov2018averaging} propose Stochastic Weight Averaging (SWA), an additional stage of optimization with a constant or cyclical learning rate, which computes an equally weighted average of iterates.
SWA can be motivated heuristically in the following way:
with the high learning rate, it seeks out wider and flatter basins in the training loss surface to improve generalization, but the high learning rate also prevents it from reaching the bottom of the basin, so the weights bounce around it, thus taking their average should land closer to the minimum point.
The SWA algorithm is almost identical to Tail Averaging (except for a possibly cyclical learning rate and a periodic subsampling of iterates), but it is motivated from the angle of ensembling and generalization not of optimization.

If our goal is to improve generalization, the decision of when to start averaging the weights should depend on generalization performance.
Indeed, \cite{merity2017regularizing} propose an algorithm much like SWA, where averaging is triggered when the validation loss does not improve for a fixed number of optimization steps, which trades one hyperparameter for another and is sensitive to noise in the evaluation of the generalization loss.
In other related work, \citet{guo2022stochastic} investigate the repeated application of SWA.
Their method is not informed by the validation loss and requires the schedule of multiple SWA stages to be specified.
Finally, taking the exponential moving average of iterates is also sensitive to its hyperparameter, the decay rate.

In summary, existing averaging methods for generalization that behave well in practice all have one or more hyperparameters to govern the weighting of early iterates.
Tuning these hyperparameters can be costly, particularly in the presence of other hyperparameters and when training runs take a long time.
Furthermore, even with their hyperparameters, these methods are not flexible enough to estimate the optimal average at multiple optimization steps in general.
We address these issues in the present work.
The rest of this chapter is structured as follows.
In \Cref{sec:tta-problem-statement}, we formally define the problem to solve.
In \Cref{sec:tta-algorithm}, we provide a description of the algorithm, whose properties are analyzed in \Cref{sec:tta-analysis}.
We verify our analysis experimentally in \Cref{sec:tta-experiments} and discuss the validity of our assumptions in \Cref{sec:tta-failed-assumptions}.

\section{Problem Statement}
\label{sec:tta-problem-statement}

Let $\Theta$ be the parameter (or weight) space, $\theta_t \in \Theta$ $(t \in \natnumzero)$ a sequence of iterates produced by stochastic optimization with $\theta_0$ being the initial value, and $f \colon \Theta \to \Rb$ the generalization loss function.
We may choose the generalization loss to simply be the validation loss, or it may measure performance on a down-stream task.
We assume the generalization loss is evaluated periodically, every $E \in \natnum$ optimization steps, and it is at these points $n \in [0, E, 2E, 3E, \dots]$ where we would like to know how many of the most recent iterates to average to minimize it.
Here and in the following, $t$ and $n$ (with or without subscripts) are assumed to be from $\natnumzero$ and $[0, E, 2E, 3E, \dots]$, respectively, and \emph{loss} always refers to $f$.
Denoting the average of most recent $\Delta$ iterates up to time step $t$ with $\avg(t, \Delta) = \frac{1}{\Delta} \sum_{i=t+1-\Delta}^{t} \theta_i$, we define the optimal averaging length as
\begin{align*}
\OL(t) = \argmin_{\Delta \in [1, \dots, t]} f(\avg(t, \Delta)).
\end{align*}
Our task is to approximate $\OL(n)$ and $\avg(n, \OL(n))$ at all evaluation steps $n$ during optimization.

The trivial algorithm to find $\OL(n)$, which saves all $\theta_i$ and performs a search over $\Delta \in [1, \dots, n]$ to minimize $f(\avg(n, \Delta))$, has prohibitive storage and evaluation cost, proportional to $n$.
Even assuming that $f$ improves monotonically in $\Delta$ up to its optimum, the cost is still proportional to $\OL(n)$.
Our proposed algorithm approximates $\OL(n)$ and $\avg(n, \OL(n))$ with a constant cost.

\section{The Algorithm}
\label{sec:tta-algorithm}

\begin{algorithm}[t]
  \caption[The core Two-Tailed Averaging algorithm, without extensions.]{\small The core Two-Tailed Averaging algorithm.
It has 2 running averages, a short one $\theta^S$ and a long one $\theta^L$ with $S \leq L$ number of optimization iterates averaged.
If the loss with $\theta^S$ becomes lower or equal to the loss with $\theta^L$, then we empty the long average, which becomes the short one.}
  \label{alg:two-tailed-averaging-core}
  \begin{algorithmic}[1]
    \Require{generalization loss function $f$,
      optimization iterates $\theta_t$, evaluation period $E$}
    \Let{$S,\theta^S, L, \theta^L$} {$0, \mathbf{0}, 0, \mathbf{0}$}
    \MyComment{The first evaluation will cause a switch.}
    \State
    \Procedure{$add\_weights$}{$\theta$}
      \Let{$S, \theta^S$}{$S+1,\theta^S + (\theta-\theta^S)/(S+1)$}
      \MyComment{Add $\theta$ to the short average}
      \Let{$L, \theta^L$}{$L+1,\theta^L + (\theta-\theta^L)/(L+1)$}
      \MyComment{Add $\theta$ to the long average}
    \EndProcedure
    \State
    \Procedure{$switch$}{}
      \MyComment{Reset the long average}
      \Let{$S, L,\theta^L$}{$0,S,\theta^S$}
      \MyComment{Must switch short and long to maintain $S \leq L$}
    \EndProcedure
    \State
    \Function{$evaluate\_and\_adapt$}{$\theta, f$}
      \Let{$F^S,F^L$}{$f(\theta^S),f(\theta^L)$}
      \If{$F^S \leq F^L$}
        \MyComment{Is the short average better?}
        \Let{$F^L$}{$F^S$}
        \State $switch()$
        \label{algo:line:switch}
      \EndIf
      \State \Return{$F^L,\theta^L,L$}
    \EndFunction
    \State
    \For{$t \gets 1, 2, \dots$}
      \MyComment{Training loop}
      \State $add\_weights(\theta_t)$
      \MyComment{$\theta_t$ comes from the ongoing optimization}
      \label{algo:line:beforeswitch}
      \If{$t \bmod E = 0$}
        \MyComment{Evaluate $f$ every $E$ iterates}
        \Let{$f_{\bar{\theta}}, \bar{\theta}, len$}{$evaluate\_and\_adapt(\theta_t, f)$}
        \State{$report(t, f_{\bar{\theta}}, \bar{\theta}, len)$}
      \EndIf
    \EndFor
  \end{algorithmic}
\end{algorithm}

\Cref{alg:two-tailed-averaging-core} specifies the core of Two-Tailed Averaging (\tta{}) in pseudocode, which works as follows.
The training loop iterates over weights $\theta_t$ produced by a stochastic optimizer, incorporating them into the short and long running averages $\theta^S$, $\theta^L$ with lengths $S$ and $L$.
Then, every $E$ steps, the loss is evaluated with the short average $\theta^S$ and with the long average $\theta^L$, giving $F^S$ and $F^L$.
If $F^S$ is at least as good as $F^L$, then we switch: the long average is reset,
and since that makes it the shorter of the two averages, we must switch their labels.
In other words, on a switch, the long average continues from the current short average and the short average is restarted (see \Cref{fig:schematic}).
For time step $t$, the estimate of the optimal averaging length is $L$, and $\theta^L$ is the corresponding average.

\begin{algorithm}[t]
  \caption[Extensions to Two-Tailed Averaging.]{\small Extensions to Two-Tailed Averaging.
This is the version recommended for use in practice.
The parts unchanged from \Cref{alg:two-tailed-averaging-core} are grayed out.
There are two extensions.
First, the short and long averages are reset if they are stagnating (i.e.\  they have not improved for a few evaluations).
This reset heuristic makes the algorithm quicker to adapt when \Cref{monotone-opt} is violated (see \Cref{sec:tta-failed-assumptions}).
Second, we defer to non-averaged weights very early in training.}
  \label{alg:two-tailed-averaging-with-extensions}
  \begin{algorithmic}[1]
    \color{gray!80}
    \Function{$evaluate\_and\_adapt$}{$\theta, f$}
      \Let{\textcolor{black}{$F^1,$}$F^S,F^L$}
          {\textcolor{black}{$f(\theta),$}$f(\theta^S),f(\theta^L)$}
      \If{$F^S \leq F^L$ \textcolor{black}{or $F^L$ is stagnating}}
        \Let{$F^L$}{$F^S$}
        \State $switch()$
      \color{black}
      \ElsIf{$F^S$ is stagnating}
        \Let{$S$}{$0$}
      \EndIf
      \If{$L > 1$ and $F^1 \leq F^L$}
        \MyComment{Use the non-averaged weights if better}
        \If{$L=E$}
          \MyComment{If they have always been better so far,}
          \Let{$S,L$}{$0,0$}
          \MyComment{\dots then reinitialize}
        \EndIf
        \State \Return{$F^1,\theta,1$}
      \EndIf
      \color{gray!80}
      \State \Return{$F^L,\theta^L,L$}
    \EndFunction
  \end{algorithmic}
\end{algorithm}

In \Cref{alg:two-tailed-averaging-with-extensions}, we present two heuristic extensions to the core algorithm.
First, the long and short averages are reset if they have not improved for a few evaluations.
This reset heuristic is intended to handle cases where the averages become too long, perhaps due to optimization escaping from one basin of attraction to a better one or due to the loss surface changing in a non-stationary environment.
Second, we defer to the non-averaged weights very early in training, where $f(\theta_t)$ is still improving rapidly enough that averaging the minimum $E$ iterates is worse than not averaging at all.


\section{Analysis of the Algorithm}
\label{sec:tta-analysis}

Our analysis hinges on simplifying assumptions, which follow from, for example, a monotonically decreasing loss and averaging producing diminishing returns as the length increases.
They represent idealized circumstances; we discuss their validity and failures in \Cref{sec:tta-failed-assumptions}.

\begin{assumption}\label{easy-opt}
For all $n$, as a function of $\Delta \in [0, E, 2E, \dots, \OL_E(n)]$, $f(\avg(n,\Delta))$ is monotonically decreasing, where $\OL_E(n)=\floor{\OL(n)/E}E$.
That is, for any given evaluation step $n$, averaging more iterates from the past monotonically improves $f$ until about the optimum length.
\end{assumption}

\begin{assumption}\label{easy-opt2}
For all $n$ and $n_+$, such that $n_+ \geq \OL^E(n)$, $f(\avg(n,\OL^E(n))) \leq f(\avg(n,n_+))$, where $\OL^E(n)=\ceil{\OL(n)/E}E$.
That is, averaging slightly more than optimal is better than averaging a lot more.
\end{assumption}

\begin{assumption}\label{slow-opt}
$\forall n \colon \exists n_s \colon \OL(n+n_s) - \OL(n) < n_s$, that is, the optimal average forgets over a sufficiently long interval.
\end{assumption}

\begin{assumption}\label{monotone-opt}
$\OL(n) \leq \OL(n+E)$, that is, the optimal number of weights to average is monotonically increasing from one evaluation to the next.
\end{assumption}

Let $S(t)$, $\theta^S(t)$, $L(t)$, and $\theta^L(t)$ stand for the values of variables $S$, $\theta^S$, $L$, and $\theta^L$ in \Cref{alg:two-tailed-averaging-core}, respectively, after $t$ times through the loop.
Similarly, let $S'\!(t)$, $\theta^{S'\!\!}(t)$, $L'\!(t)$, and $\theta^{L'\!\!}(t)$ stand for the values of the same variables at the same iteration but after \Cref{algo:line:beforeswitch} (i.e.\ before the possible switch of the short and long averages).
Furthermore, we introduce the shorthands $f^X(t) = f(\avg(t,X(t)))$ for $X \in \{S,S',L,L',\OL\}$ with $f^X(t) = +\infty$ if $X(t)=0$.

\begin{definition}[Switch point]
We say that $n$ is a switch point if at $t=n$ \Cref{algo:line:switch} is executed, that is, when the short average becomes at least as good as the long average with respect to the loss, and consequently the long average is reset.
We denote the most recent switch point before iteration $t$ with $\SP(t)$, where $\SP(t) < t$.
If there is no such switch point, then $\SP(t)=-1$.
\end{definition}

\Cref{monotone-opt} states that the optimal averaging length monotonically increases, so to simplify the analysis, without loss of generality, we assume throughout that the raw loss $F^1$ has already been eclipsed by $F^L$ at the first evaluation.
We also assume that the reset heuristic cannot trigger.
In effect, we ignore the extensions in \Cref{alg:two-tailed-averaging-with-extensions} and analyze the core logic in \Cref{alg:two-tailed-averaging-core}.
For the proofs of the following propositions, see \Cref{sec:proofs-for-analysis}.

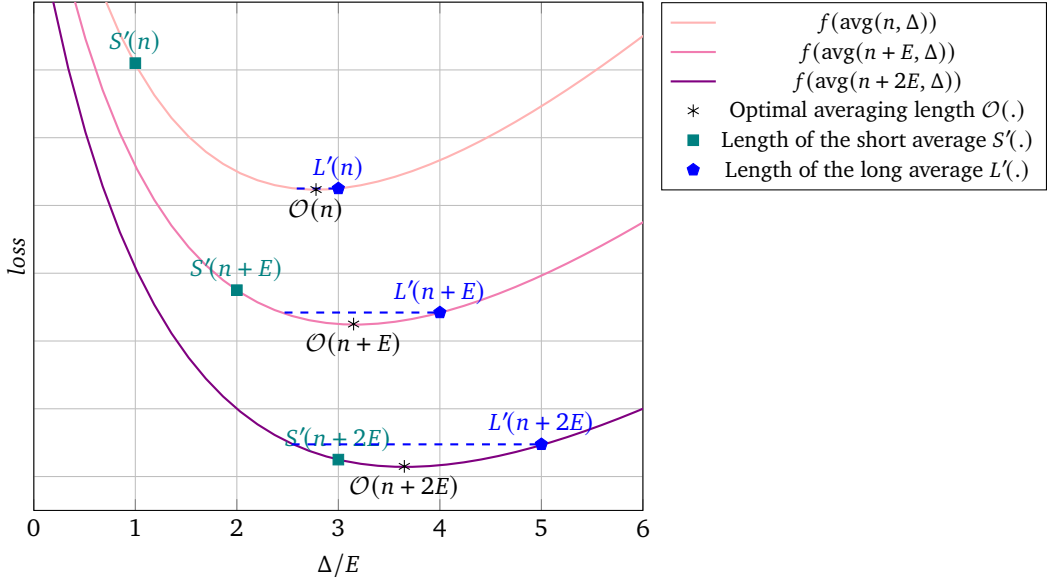
\begin{figure*}
\centering
\begin{tikzpicture}
\begin{axis}[xmin=0,xmax=6,samples=60,grid=major,ymin=-2.3,ymax=-0.8,
    xlabel=$\Delta/E$, ylabel=$loss$, ymajorticks=false,
    height=0.5*\textwidth,
    width=0.58*\textwidth,
    legend pos=outer north east,
    legend style={nodes={scale=0.9, transform shape}}
    ]
\addplot [color=red!30!white,domain=0:10,style=thick]
    {-4+4/(1+0.5*x)+0.35*x};
\addlegendentry{$f(\avg(n,\Delta))$}
\addplot [color=magenta!65!white,domain=0:10,style=thick]
    {-4+4/(1+0.5*x)+0.30*x-0.25};
\addlegendentry{$f(\avg(n+E,\Delta))$}
\addplot [color=violet!100!white,domain=0:10,style=thick]
    {-4+4/(1+0.5*x)+0.25*x-0.5};
\addlegendentry{$f(\avg(n+2E,\Delta))$}

\addplot[only marks, color=black, mark=asterisk, mark options={scale=1.2}]
  coordinates {(2.78,-1.353)}
  node[below,pos=1] {$\OL(n)$};
\addlegendentry{Optimal averaging length $\OL(.)$}
\addplot[color=black, mark=asterisk, mark options={scale=1.2}, forget plot]
  coordinates {(3.15,-1.75)}
  node[below,pos=1] {$\OL(n+E)$};
\addplot[color=black, mark=asterisk, mark options={scale=1.2}, forget plot]
  coordinates {(3.65,-2.17)}
  node[below,pos=1] {$\OL(n+2E)$};

\addplot[only marks, color=teal, mark=square*]
  coordinates {(1.0,-0.98)}
  node[above,pos=1] {$S'\!(n)$};
\addlegendentry{Length of the short average $S'\!(.)$}
\addplot[color=teal, mark=square*, forget plot]
  coordinates {(2.0,-1.65)}
  node[above,pos=1] {$S'\!(n+E)$};
\addplot[color=teal, mark=square*, forget plot]
  coordinates {(3.0,-2.15)}
  node[above,pos=1] {$S'\!(n+2E)$};

\addplot[only marks, color=blue, mark=pentagon*, mark options={scale=1.2}]
  coordinates {(3,-1.35)}
  node[above,pos=1] {$L'\!(n)$};
\addlegendentry{Length of the long average $L'\!(.)$}
\addplot[color=blue,domain=2.59:3, thick, dashed, forget plot] {-1.35};
\addplot[color=blue, mark=pentagon*, mark options={scale=1.2}, forget plot]
  coordinates {(4.0,-1.716)}
  node[above,pos=1] {$L'\!(n+E)$};
\addplot[color=blue,domain=2.47:4,thick, dashed, forget plot] {-1.716};
\addplot[color=blue, mark=pentagon*, mark options={scale=1.2}]
  coordinates {(5.0,-2.105)}
  node[above,pos=1] {$L'\!(n+2E)$};
\addplot[color=blue,domain=2.54:5,thick, dashed, forget plot] {-2.105};

\end{axis}
\end{tikzpicture}
\caption[Idealized illustration of switching.]{Idealized illustration of switching.
The three curves show the loss as a function of averaging length at three subsequent evaluations (at optimization steps $n$, $n+E$, and $n+2E$).
The raw loss keeps improving, hence later evaluations have lower loss curves. The optimal averaging length increases, $\OL(n) \leq \OL(n+E) \leq \OL(n+2E)$, as per \Cref{monotone-opt}.
At $t=n+2E$, where the loss of the short average dips below the loss of the long average, the long average is reset, and the short average becomes the long average, so we have $S(n+2E)=0$ and $L(n+2E)=S'\!(n+2E)$.}
\label{fig:switching}
\end{figure*}

\begin{restatable}[Bounds for the averaging lengths]{proposition}{propbounds}
\label{prop:bounds}
The lengths of the short and long averages are bounded as $S(n) < \OL(n)$ and $L(n) < 2\OL(n) + E$.
\end{restatable}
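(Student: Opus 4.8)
The plan is to analyze the core algorithm (\Cref{alg:two-tailed-averaging-core}) directly through its switch points. First I would pin down the exact bookkeeping: writing $m=\SP(n)$ and $m^-=\SP(m)$, and, for notational convenience, treating the initial state as a switch point at $t=0$ so the first epoch is covered uniformly, between two consecutive switch points $S$ and $L$ each increase by $E$ from one evaluation to the next; hence just before the switch test at an evaluation step $n$ we have $S'(n)=n-m$ and $L'(n)=n-m^-$, both multiples of $E$ with $S'(n)\le L'(n)$, while $S(n)=0$ and $L(n)=S'(n)$ if $n$ is a switch point, and $S(n)=S'(n)$, $L(n)=L'(n)$ otherwise. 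In particular $L(n)=S(n)+S'(m)$ whenever $n$ is not a switch point.

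The heart of the argument is that a switch is forced as soon as the short average reaches length $\OL^E(n)=\ceil{\OL(n)/E}E$: if $S'(n)=\OL^E(n)$ then $L'(n)\ge S'(n)=\OL^E(n)$, so \Cref{easy-opt2} applied with $n_+=L'(n)$ gives $f^{S'}(n)=f(\avg(n,\OL^E(n)))\le f(\avg(n,L'(n)))=f^{L'}(n)$, which triggers \Cref{algo:line:switch}. Since $S'$ jumps by exactly $E$ per evaluation inside an epoch (and equals $E$ at the evaluation right after a switch), and since $\OL^E$ is non-decreasing by \Cref{monotone-opt}, an induction over evaluation steps then yields the invariant $S'(n)\le\OL^E(n)$ for every $n$: the base case is $S'(E)=E=\OL^E(E)$; for the step, if the previous evaluation was a switch point then $S'(n+E)=E\le\OL^E(n+E)$, and otherwise no switch occurred there, so by the forced-switch claim $S'(n)\ne\OL^E(n)$, hence $S'(n)\le\OL^E(n)-E$ (both multiples of $E$), and therefore $S'(n+E)=S'(n)+E\le\OL^E(n)\le\OL^E(n+E)$.

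With the invariant in hand, both bounds follow by arithmetic on $\OL^E$. Because $\ceil{\OL(n)/E}E-E=(\ceil{\OL(n)/E}-1)E<\OL(n)\le\OL^E(n)$, we get $\OL^E(n)-E<\OL(n)\le\OL^E(n)<\OL(n)+E$. For the short average: at a switch point $S(n)=0<\OL(n)$; at a non-switch $n$ there was no switch, so as in the inductive step $S'(n)\ne\OL^E(n)$, whence $S(n)=S'(n)\le\OL^E(n)-E<\OL(n)$. For the long average: at a switch point $L(n)=S'(n)\le\OL^E(n)<\OL(n)+E<2\OL(n)+E$; at a non-switch $n$, using $L(n)=S(n)+S'(m)$ with $m=\SP(n)<n$, the first bound gives $S(n)<\OL(n)$, while the invariant at $m$ together with monotonicity of $\OL^E$ gives $S'(m)\le\OL^E(m)\le\OL^E(n)<\OL(n)+E$, so $L(n)<2\OL(n)+E$.

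I expect the main obstacle to be the forced-switch claim and, tied to it, getting $\OL^E(n)$ — rather than $\OL(n)$ or $\OL_E(n)$ — into the invariant: one needs both $S'(n)$ and $\OL^E(n)$ to be multiples of $E$, so that $S'(n)\le\OL^E(n)$ with no switch sharpens to $S'(n)\le\OL^E(n)-E$, and one needs \Cref{easy-opt2} in precisely the form that overshooting to $\OL^E(n)$ is no worse than any larger length in order to deduce that the switch fires when $S'$ hits $\OL^E$. Everything else — the closed forms for $S'$, $L'$, $S$, and $L$ in terms of $\SP$, and the ceiling estimates — is routine.
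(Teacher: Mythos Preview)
Your argument is correct and essentially the same as the paper's: both hinge on the observation that once $S'(n)$ reaches $\OL^E(n)$, \Cref{easy-opt2} with $n_+=L'(n)$ forces a switch, and both bound $L(n)$ via the decomposition $L(n)=S(n)+S'(\SP(n))$. The only cosmetic difference is that you package the first part as a forward induction maintaining the invariant $S'(n)\le\OL^E(n)$, whereas the paper argues by contradiction (assume $S(n_0)\ge\OL(n_0)$, locate the first crossing, and derive $f^S(n)\le f^L(n)$); the mathematical content is identical.
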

\ifdef{\groundskip}{\vspace*{-\groundskip}}{}

\begin{restatable}[Infinite number of switch points]{proposition}{propinfinite}
\label{prop:infinite-switch-points}
Switch points keep coming, that is, $\forall n \colon \exists n_s \geq n \colon \SP(n_s) \neq -1$.
\end{restatable}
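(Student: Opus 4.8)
The plan is to argue by contradiction: assume only finitely many switch points occur, and derive a clash between \Cref{prop:bounds} and \Cref{slow-opt}.

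First I would note that the set of switch points is nonempty. At the first evaluation $t=E$ the short and long averages coincide, both equal to $\avg(E,E)$, so $F^S=F^L$ and the switch on \Cref{algo:line:switch} fires. Hence, under the finiteness assumption, there is a largest switch point $m\ge E$. After the switch at $t=m$ the counter $S$ is reset to $0$ and thereafter incremented once per pass through \Cref{algo:line:beforeswitch}, never reset again because no further switch occurs; therefore $S(n)=n-m$ at every evaluation step $n>m$. By \Cref{prop:bounds} we have $S(n)<\OL(n)$ at every evaluation step, so combining the two gives $\OL(n)-n>-m$ for all evaluation steps $n>m$: once switching stops, the optimal averaging length is forced to keep up with the step count forever.

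Next I would contradict this using the forgetting property. Put $n_0:=m$ and iterate \Cref{slow-opt}: it supplies $n_s^{(k)}$ with $\OL(n_k+n_s^{(k)})-\OL(n_k)<n_s^{(k)}$, and I set $n_{k+1}:=n_k+n_s^{(k)}$. Each $n_s^{(k)}$ is a positive multiple of $E$, so every $n_k$ is an evaluation step and all of $\OL(n_k),\OL(n_{k+1}),n_k,n_{k+1}$ are nonnegative integers; the strict inequality then upgrades to $\OL(n_{k+1})-n_{k+1}\le\OL(n_k)-n_k-1$, and telescoping gives $\OL(n_k)-n_k\le\OL(m)-m-k$. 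Choosing any $k>\OL(m)$ yields an evaluation step $n_k>m$ with $\OL(n_k)-n_k<-m$, contradicting the inequality $\OL(n)-n>-m$ from the previous paragraph. Hence infinitely many switch points occur; in particular, for every $n$ there is a switch point at some evaluation step $\ge n$, which gives the stated conclusion and in particular makes $\SP(n_s)\neq-1$ for all $n_s>E$.

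I expect the difficulty here to be bookkeeping rather than conceptual: one must (i) record that the first evaluation always switches so that ``the last switch point'' is well defined; (ii) confirm that citing \Cref{prop:bounds} is not circular, since that bound is established from \Cref{easy-opt,easy-opt2,slow-opt,monotone-opt} without reference to the present proposition; and (iii) check that the iteration of \Cref{slow-opt} terminates, which it does because each application strictly decreases the integer quantity $\OL(n)-n$ and so cannot be repeated more than $\OL(m)+1$ times. The crux, once this is in place, is the observation that ``no more switches'' forces $S(n)=n-m$ and hence linear growth of $\OL$, which \Cref{slow-opt} forbids.
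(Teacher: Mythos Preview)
Your proof is correct and follows the same approach as the paper's: both argue that after a hypothetical last switch the short-average length grows linearly while \Cref{slow-opt} prevents $\OL$ from keeping pace, so eventually $S\ge\OL$, which (via \Cref{easy-opt2}, invoked directly in the paper or packaged inside \Cref{prop:bounds} as you do) forces another switch. Your explicit iteration of \Cref{slow-opt} to drive the integer gap $\OL(n)-n$ below $-m$ is a welcome sharpening of the paper's terse ``$\OL(n)$ grows more slowly'' remark, and your non-circularity check on \Cref{prop:bounds} is accurate.
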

\ifdef{\groundskip}{\vspace*{-\groundskip}}{}

\begin{restatable}[Once-in-a-while optimality]{proposition}{proponce}
\label{prop:once-in-a-while-optimality}
Between any two subsequent switch points $n_1$ and $n_2$, the long average is nearly optimal at least once.
Formally, $\exists n \in  [n_1, n_2-E] \colon L(n) = \OL^E(n) \lor L(n) = \OL_E(n)$.
\end{restatable}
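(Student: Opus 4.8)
The plan is to track the length $L$ of the long average relative to the rounded optima $\OL_E(n)=\floor{\OL(n)/E}E$ and $\OL^E(n)=\ceil{\OL(n)/E}E$ across the interval $[n_1,n_2-E]$, exploiting two structural facts about \Cref{alg:two-tailed-averaging-core}: between consecutive switch points $n_1<n_2$ the long average is never reset (a reset happens only at a switch point), so $L$ increases by exactly $E$ from one evaluation to the next and $L(n)$ is a multiple of $E$ at every evaluation step; and immediately after the switch at $n_1$ we have $S(n_1)=0$, $L(n_1)=S'\!(n_1)$, with $L-S$ staying constant (equal to $S'\!(n_1)\ge E$) until $n_2$. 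First I would establish the anchor inequality $L(n_1)\le\OL^E(n_1)$: since $S'\!(n_1)=S(n_1-E)+E$ and, by \Cref{prop:bounds} together with \Cref{monotone-opt}, $S(n_1-E)<\OL(n_1-E)\le\OL(n_1)$, we get $L(n_1)=S'\!(n_1)<\OL(n_1)+E$, and because $L(n_1)$ is a multiple of $E$ this forces $L(n_1)\le\OL^E(n_1)$. The degenerate case $n_1=E$ is handled directly, since then $L(E)=S'\!(E)=E\le\OL^E(E)$.

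Next I would split into two cases according to whether $L$ ever reaches $\OL_E$ inside the interval. Suppose there is an evaluation step $n\in[n_1,n_2-E]$ with $L(n)\ge\OL_E(n)$, and let $n^\ast$ be the smallest such. If $n^\ast=n_1$, then $\OL_E(n_1)\le L(n_1)\le\OL^E(n_1)$, and since $L(n_1)$ is a multiple of $E$ lying between the two consecutive (or equal) multiples $\OL_E(n_1)$ and $\OL^E(n_1)$, we have $L(n_1)\in\{\OL_E(n_1),\OL^E(n_1)\}$, as required. If $n^\ast>n_1$, minimality gives $L(n^\ast-E)<\OL_E(n^\ast-E)\le\OL_E(n^\ast)$; as both sides are multiples of $E$ this means $L(n^\ast-E)\le\OL_E(n^\ast)-E$, hence $L(n^\ast)=L(n^\ast-E)+E\le\OL_E(n^\ast)$, and combining with $L(n^\ast)\ge\OL_E(n^\ast)$ yields $L(n^\ast)=\OL_E(n^\ast)$.

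It remains to rule out the complementary case, in which $L(n)<\OL_E(n)$ for every $n\in[n_1,n_2-E]$. Applying this at $n_2-E$ (again both multiples of $E$) gives $L(n_2-E)\le\OL_E(n_2-E)-E$, so $L'\!(n_2)=L(n_2-E)+E\le\OL_E(n_2-E)\le\OL_E(n_2)$. But $n_2$ is a switch point, so $f^{S'}(n_2)\le f^{L'}(n_2)$, while $0<S'\!(n_2)<L'\!(n_2)\le\OL_E(n_2)$ with both lengths multiples of $E$; by \Cref{easy-opt}, $f(\avg(n_2,\cdot))$ is strictly decreasing on $\{0,E,\dots,\OL_E(n_2)\}$, whence $f^{S'}(n_2)>f^{L'}(n_2)$ — a contradiction. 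Thus this case is impossible, which completes the argument.

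I expect the delicate points to be (i) the anchor inequality $L(n_1)\le\OL^E(n_1)$, which requires invoking \Cref{prop:bounds} one evaluation period before the switch and using that $S'\!(n_1)$ is a multiple of $E$, with the boundary case $n_1=E$ treated separately; and (ii) the closing step, where the contradiction needs the monotone decrease of \Cref{easy-opt} to be strict (equivalently, the switch test $F^S\le F^L$ to genuinely conflict with $S'\!(n_2)<L'\!(n_2)$ on the monotone stretch). Everything else is routine bookkeeping about how $L$, $S$, $\OL_E$, and $\OL^E$ change between consecutive evaluations, using only that $L$ advances by exactly $E$ while $\OL_E$ and $\OL^E$ are non-decreasing by \Cref{monotone-opt}.
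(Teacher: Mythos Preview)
Your proof is correct and follows essentially the same route as the paper's: anchor at $L(n_1)\le\OL^E(n_1)$, show that unless $L$ meets $\OL_E$ somewhere in $[n_1,n_2-E]$ the switch at $n_2$ would contradict \Cref{easy-opt}, and use a discrete intermediate-value step (your first-crossing argument at $n^\ast$) to pin down the point of equality. Your flag about needing strict monotonicity in \Cref{easy-opt} is apt and is the same implicit assumption the paper makes when it asserts that a switch ``can happen only if $\OL(n)<L'\!(n)$''.
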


In short, we have shown that the long average is at most twice as long as optimal, there are infinitely many switch points, and between any two switch points the long average is approximately optimal at least once.
Our results are in terms of lengths of averages, and relating the actual loss with the long average $f^L$ to the loss with the optimal length $f^{\OL}$ would be desirable.
Here, we informally point out that, all things being equal, the worse $f^L$ gets relative to $f^{\OL}$, the quicker $f^S$ is to catch up with $f^L$, making long periods of highly suboptimal solution less likely.
Formalizing this notion requires making further assumptions about the loss-vs-averaging-length function (of the kind plotted in \Cref{fig:switching}) and would make analysis considerably more cumbersome.

\subsection{When Assumptions Fail}
\label{sec:tta-failed-assumptions}

To augment the theoretical analysis, which is based on idealized assumptions, we make the following observations.
The strongest assumption by far is \Cref{easy-opt}.
It says that increasing the averaging length monotonically improves $f$ until the optimum.
Since stochastic optimization produces noisy iterates, this does not hold exactly in practice.
However, the length of the shortest average is one evaluation period, and its variance is inversely proportional to $E$.
Thus, the likelihood of noise posing a problem can be very small.
In terms of the loss, the algorithm is fairly robust to when the assumption holds only approximately because small deviations of $f(\avg(n,\Delta))$ from monotonicity can change switch times only when $f^S$ and $f^L$ are close.

\Cref{easy-opt2} says that averaging slightly more iterates than optimal (i.e.\ rounded up to the evaluation period) is better than averaging a lot more.
This is a weak assumption due to subsequent iterates being highly correlated.
If it is violated sporadically, the algorithm can fail to detect when the short average becomes longer than optimal, which delays the switch.

\Cref{slow-opt} failing means that the optimal average incorporates all new iterates without ever dropping old ones.
In this case, the short average, which is always shorter than optimal, will be a constant number of iterates behind and its loss will converge to the loss of the optimal average.
If the long average is shorter than optimal, then the same argument applies to it.
If the long average is longer than optimal, then eventually a switch will happen.
In either case, the loss of the long average converges to that of the optimal average.

Regarding \Cref{monotone-opt}, $\OL(n) \leq \OL(n+E)$ can fail if the improvement of the raw loss accelerates, but that is a rather uncommon and temporary occurrence.
It may also fail if the raw loss has started to worsen due to overfitting or optimization has escaped from one basin to the next and the average is slowly climbing the ridge separating them or when the loss landscape changes during learning in a non-stationary environment.
With the exception of accelerating improvement, these are likely to be caught by the reset heuristic, wherein the long average is reset if its loss does not improve for a few evaluations (see \Cref{alg:two-tailed-averaging-with-extensions}).
The reset heuristic can trigger when it should not, i.e.\ when \Cref{monotone-opt} holds.
Such a spurious reset makes the estimate of the long average worse either directly or indirectly by delaying the next switch.
Either way, without further violations of this assumption, the algorithm recovers by the next switch.
Note that \tta{} cannot in general correct overfitting, although the reset heuristic may help in the unlikely event that overfitting is transitory.

All in all, we can expect the algorithm to display some degree of robustness to minor violations of the assumptions.
In practice, we recommend choosing a reasonably large $E$ to reduce the noise originating from the stochasticity of optimization.

\subsection{A Note on Pure Optimization}
\label{sec:tta-pure-optimization}

Applying \tta{} to pure optimization is unlikely to bring about practical benefits because of the evaluation cost.
For example, if $f$ computes the loss over the entire training set and evaluation is performed every epoch, then the cost of optimization is effectively doubled.
Furthermore, as we have pointed out above, \Cref{easy-opt} does not hold exactly with stochastic optimization: the short average can get lucky and become better than the long one (causing a switch) but then quickly succumb to variance and become worse as new iterates are added to it.
Thus, the averaged weights of \Cref{alg:two-tailed-averaging-core,alg:two-tailed-averaging-with-extensions} do not converge in the strict sense, although this point is somewhat moot because --~due to the mismatch between the true and the training losses~-- convergence in the training loss is almost never desirable when optimizing for generalization.
Nevertheless, it is instructive to consider how the algorithm behaves when $f$ is the training loss as the limit of the common case where $f$ is the validation loss, both the training and the validation sets consist of i.i.d. samples from the same distribution, and their sizes tend to infinity.
Focussing on the setting where convergence results are available for Polyak and Tail Averaging, we show in \Cref{sec:properties-of-tta-in-pure-optimization} that \tta{} converges in probability to the optimum in ordinary least squares regression.
We also show that the losses of the short averages at switch points monotonically decrease if $f$ is convex.

\section{Experiments}
\label{sec:tta-experiments}

\begin{figure}[t]
  \centering
  \includegraphics[width=0.9\textwidth]{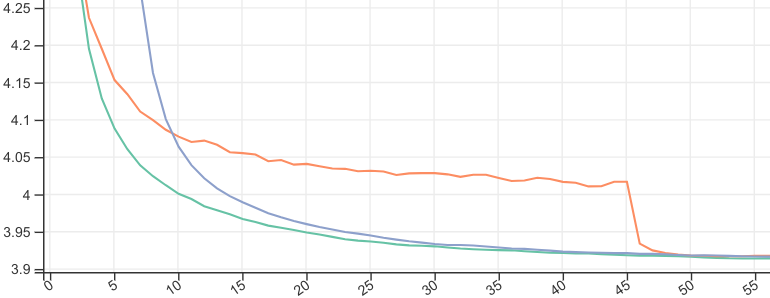}
  \caption[Validation losses with Two-Tailed Averaging and baselines.]{Validation loss with Two-Tailed Averaging (\tta{}, green), Tail Averaging (TA, orange), and an exponential moving average (EMA, blue) of weights on language modelling on Penn Treebank.
For both TA and EMA, their hyperparameters (the start time and the decay rate) were tuned to minimize the final loss, so it is not a surprise that all three have similar optima.
\tta{} has no hyperparameters and produces much better early solutions.
These two factors make tuning easier and early stopping much more reliable.
Additionally, the noise in the raw loss is effectively smoothed out.
Note that while the \tta{} loss decreases monotonically, gentler and steeper slopes are manifest before and after switch points, respectively.}
  \label{fig:swa-vs-oooswa}
\end{figure}

\begin{figure}[t]
  \centering
  \begin{tikzpicture}[scale=\ifdef{\groundskip}{0.58}{1.0}]
    \node [] at (0,0){\includegraphics[width=0.9\textwidth]{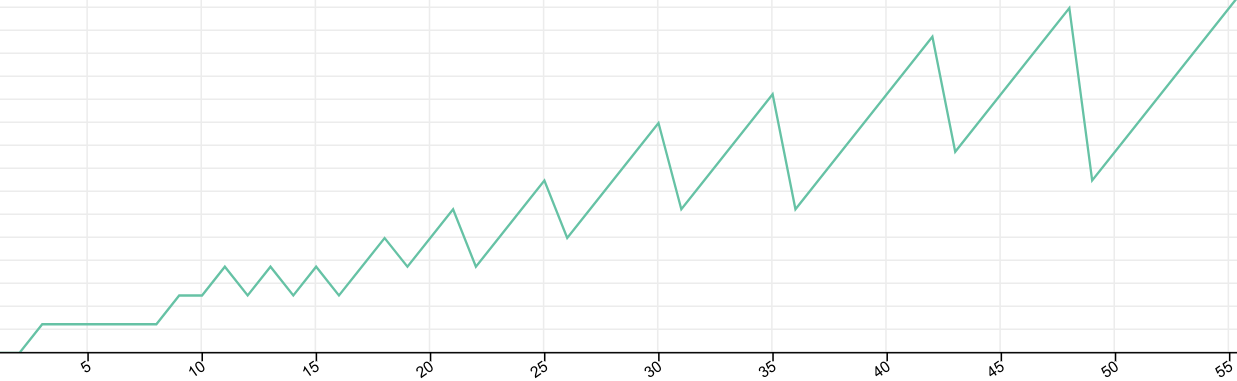}};
    \draw [{|[width=1mm]}-Circle,thick](-0.6,-1.66) -- (0.70,-0.105) node[midway,above] {S};
    \draw [-{Circle[open]},thick](0.63,-0.198) -- (1.85,1.33) node[midway,above] {L};
  \end{tikzpicture}
  \caption[Length of the long average ($L$) vs number of evaluations of \tta{}.]{Length of the long average ($L$) vs number of evaluations of \tta{} in \Cref{fig:swa-vs-oooswa}.
Note how the heights of both peaks and valleys increase almost monotonically.
Also, the correspondence between this figure and the schematic in \Cref{fig:schematic} is illustrated on one of the $S$ and $L$ intervals.}
  \label{fig:length}
\end{figure}

Tail Averaging or Stochastic Weight Averaging have been shown previously to be beneficial not only in theory and on simulated data \citep{jain2018parallelizing} but also in language modelling \citep{merity2017regularizing,melis2019mogrifier} and image classification \citep{izmailov2018averaging} experiments.
Hence, in this work, we restrict our attention to experiments in a single domain to corroborate the analysis in \Cref{sec:tta-analysis}.
Our goals are to \emph{i)} verify that \tta{} is on par with well-tuned TA and EMA, \emph{ii)} explore the effect of basing the switching logic on the training instead of the validation loss, \emph{iii)} demonstrate robustness to the choice of evaluation period, \emph{iv)} and check whether the assumptions in \Cref{sec:tta-analysis} hold in practice.

In particular, we trained a recurrent language model with several hyperparameters on Penn Treebank \citep{mikolov2010recurrent} using the Rectified Adam optimizer \citep{liu2019variance}, evaluating every 1000 optimization steps.
The hyperparameters were tuned separately for \tta{}, TA \cref{eq:tail-averaging}, and EMA \cref{eq:ema}.
\Cref{fig:swa-vs-oooswa} shows that the final validation losses with all methods are very close, but early losses with \tta{} are much better.
This is expected because TA and EMA are not flexible enough to produce optimal averaging lengths at multiple points along the learning curve despite having an extra hyperparameter.
Conversely, \tta{} has at least one nearly optimal solution between any two subsequent peaks (i.e.\ switch points) in \Cref{fig:length} despite having no hyperparameters.

We also tried the version of the algorithm where the switching logic was based on comparison of the training losses of the short and long averages instead of the validation losses, but the true validation losses were reported.
On this particular language modelling task, the best validation loss with the modified algorithm worsened moderately (3.93 vs 3.92) and was well below the raw validation loss (4.02).
Results on the test set exhibited the same gap.
Since the modified \tta{} was minimizing the training loss, the smoothness of the reported validation losses observed in \Cref{fig:swa-vs-oooswa} were lost in the process.
Similar results were obtained by scheduling a learning rate drop without averaging.

To explore the effect of the evaluation period $E$, we tuned models with four times larger and four times smaller $E$ than in our previously discussed experiments.
As expected, the best final results were very close to each other, with shorter periods having an advantage early in training as the raw loss $F^1$ was more quickly eclipsed by $F^L$.\looseness=-1

In addition, we found that the assumptions made in \Cref{sec:tta-analysis} held rather well: $F^L$ in \Cref{fig:swa-vs-oooswa}, and the averaging lengths tended to change monotonically (see heights of peaks and valleys in \Cref{fig:length}), making our length-based theoretical results more closely linked to the actual loss.
When that was not the case, we found that the raw loss $F^1$ had started to worsen due to overfitting or, much more rarely, optimization had entered a new basin, violating \Cref{monotone-opt}.
\Cref{fig:overfitting} and \Cref{fig:new-basin} demonstrate the reset heuristic being triggered in these cases.
Finally, TA and \tta{} having almost identical final validation losses weakly supports our assumptions, although a conclusive demonstration would need to plot the results obtained with TA tuned separately for each evaluation.

\section{Conclusions}

\begin{figure}[t]
  \centering
  \includegraphics[width=0.9\textwidth]{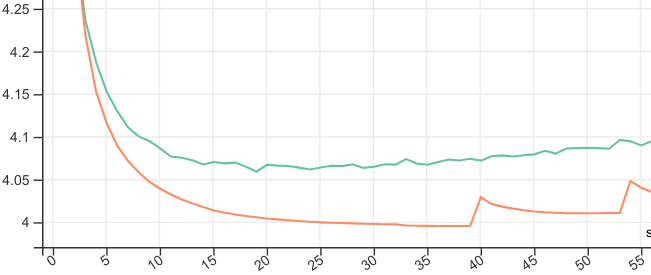}
  \caption[Raw and \tta{} validation loss with overfitting.]{Raw (green) and \tta{} (orange) validation loss with overfitting.
The averaged loss bottoms out due to overfitting.
Thus when the averages are reset (twice), the validation loss does not recover.
Although the losses reported after the reset are suboptimal, it does not really matter as a better loss was reported already.}
  \label{fig:overfitting}
\end{figure}

\begin{figure}[t]
  \begin{subfigure}{0.52\textwidth}
    \includegraphics[width=\linewidth,height=2.75cm,clip]{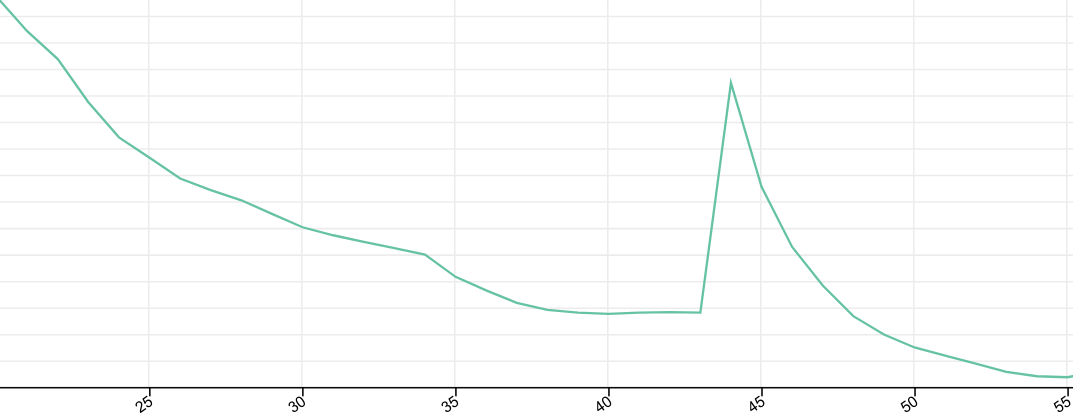}
    \caption{Validation loss $F^L$.}
  \end{subfigure}
  \hfill
  \begin{subfigure}{0.45\textwidth}
    \includegraphics[width=\linewidth,height=2.75cm,clip]
                    {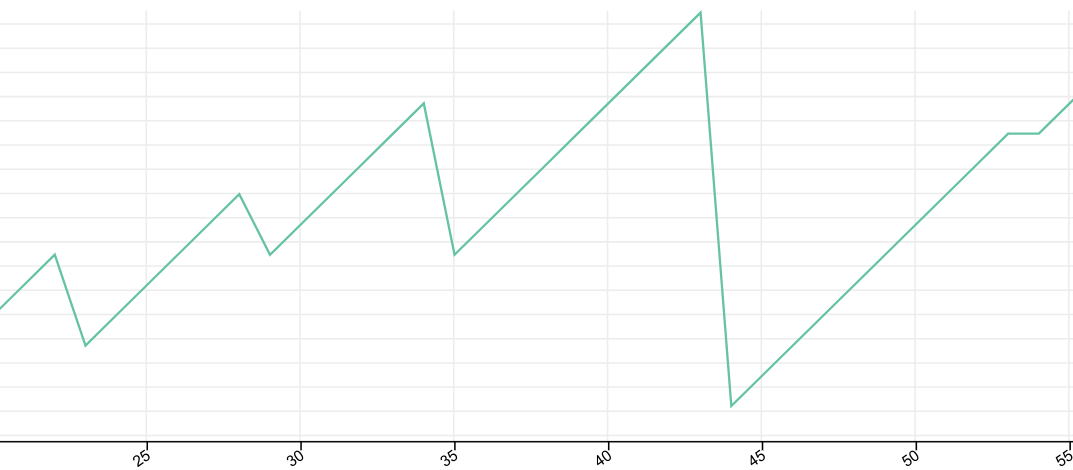}
    \caption{Length $L$.}
  \end{subfigure}
  \caption[Example of optimization entering a new basin.]{Example of optimization entering a new basin.
At $t=40E$ the validation loss with the long average bottoms out and starts to slightly worsen.
This is detected at $t=43E$, and both averages are reset.
With the averages now way too short, the loss spikes but then recovers.}
  \label{fig:new-basin}
\end{figure}

Tail averaging improves on Polyak averaging's non-asymptotic behaviour by excluding a number of leading iterates of stochastic optimization from its calculations.
In practice, with a finite number of optimization steps and a learning rate that cannot be annealed to zero, Tail Averaging can get much closer to a local minimum point of the training loss than either the individual iterates or the Polyak average.
However, the number of leading iterates to ignore is an important hyperparameter, and starting averaging too early or too late leads to inefficient use of resources or suboptimal solutions.
Our work focussed on improving generalization, which makes setting this hyperparameter even more difficult, especially in the presence of other hyperparameters and overfitting.
Furthermore, before averaging starts, the loss is only weakly informative of the final performance, which makes early stopping unreliable.
To alleviate these problems, we propose an anytime variant of Tail Averaging intended for improving generalization not pure optimization that has no hyperparameters and approximates the optimal tail at all optimization steps.
Our algorithm is based on two running averages with adaptive lengths bounded in terms of the optimal tail length, one of which achieves approximate optimality with some regularity.

In summary, we presented a variant of Tail Averaging and Stochastic Weight Averaging based on two running averages.
Compared to them, Two-Tailed Averaging requires additional storage for the second running average and relies on periodic evaluation of generalization performance.
In return, \tta{} removes a hyperparameter and provides an estimate of the optimal tail at all optimization steps.
This makes hyperparameter tuning easier and early evaluation more representative of final performance, allowing it to support early and anytime stopping better.
Owing to its simplicity, low implementation cost and adaptivity, \tta{} is a practical and widely applicable method for improving generalization.

Looking beyond the scope of this work, exploring the relationship between iterate averaging and learning rate schedules is a promising direction as existing \citep{merity2017regularizing} and our own limited experimental results indicate that dropping the learning rate and Tail Averaging perform comparably.
The properties of our algorithm are particularly compelling for continual learning: by allowing the learning rate to remain high and being able to adapt the averaging length to changing circumstances, \tta{} lets the model maintain high plasticity while reaping the benefits of averaging.

In addition, averaging weights can be viewed as a cheap approximation to averaging predictions when the averaged weights reside in a region with a suitable geometry.
The combination of averaging weights within such regions and averaging predictions over regions (each with its own weight average) could potentially achieve a better loss than weight averaging alone at much lower storage and evaluation cost than pure prediction averaging.
We leave these avenues for future work to explore.



{
  \ifdef{\groundskip}{}{\clearpage}
  \bibliography{paper}

\begin{thebibliography}{19}
\providecommand{\natexlab}[1]{#1}
\providecommand{\url}[1]{\texttt{#1}}
\expandafter\ifx\csname urlstyle\endcsname\relax
  \providecommand{\doi}[1]{doi: #1}\else
  \providecommand{\doi}{doi: \begingroup \urlstyle{rm}\Url}\fi

\bibitem[Guo et~al.(2022)Guo, Jin, and Liu]{guo2022stochastic}
Hao Guo, Jiyong Jin, and Bin Liu.
\newblock Stochastic weight averaging revisited.
\newblock \emph{arXiv preprint arXiv:2201.00519}, 2022.

\bibitem[Hochreiter and Schmidhuber(1997)]{hochreiter1997flat}
Sepp Hochreiter and J{\"u}rgen Schmidhuber.
\newblock Flat minima.
\newblock \emph{Neural computation}, 9\penalty0 (1):\penalty0 1--42, 1997.

\bibitem[Izmailov et~al.(2018)Izmailov, Podoprikhin, Garipov, Vetrov, and
  Wilson]{izmailov2018averaging}
Pavel Izmailov, Dmitrii Podoprikhin, Timur Garipov, Dmitry Vetrov, and
  Andrew~Gordon Wilson.
\newblock Averaging weights leads to wider optima and better generalization.
\newblock \emph{arXiv preprint arXiv:1803.05407}, 2018.

\bibitem[Jain et~al.(2018)Jain, Kakade, Kidambi, Netrapalli, and
  Sidford]{jain2018parallelizing}
Prateek Jain, Sham Kakade, Rahul Kidambi, Praneeth Netrapalli, and Aaron
  Sidford.
\newblock Parallelizing stochastic gradient descent for least squares
  regression: mini-batching, averaging, and model misspecification.
\newblock \emph{Journal of Machine Learning Research}, 18, 2018.

\bibitem[Keskar et~al.(2016)Keskar, Mudigere, Nocedal, Smelyanskiy, and
  Tang]{keskar2016large}
Nitish~Shirish Keskar, Dheevatsa Mudigere, Jorge Nocedal, Mikhail Smelyanskiy,
  and Ping Tak~Peter Tang.
\newblock On large-batch training for deep learning: Generalization gap and
  sharp minima.
\newblock \emph{arXiv preprint arXiv:1609.04836}, 2016.

\bibitem[Kingma and Ba(2014)]{kingma2014adam}
Diederik~P Kingma and Jimmy Ba.
\newblock Adam: A method for stochastic optimization.
\newblock \emph{arXiv preprint arXiv:1412.6980}, 2014.

\bibitem[Lacoste-Julien et~al.(2012)Lacoste-Julien, Schmidt, and
  Bach]{lacoste2012simpler}
Simon Lacoste-Julien, Mark Schmidt, and Francis Bach.
\newblock A simpler approach to obtaining an o (1/t) convergence rate for the
  projected stochastic subgradient method.
\newblock \emph{arXiv preprint arXiv:1212.2002}, 2012.

\bibitem[Liu et~al.(2019)Liu, Jiang, He, Chen, Liu, Gao, and
  Han]{liu2019variance}
Liyuan Liu, Haoming Jiang, Pengcheng He, Weizhu Chen, Xiaodong Liu, Jianfeng
  Gao, and Jiawei Han.
\newblock On the variance of the adaptive learning rate and beyond.
\newblock \emph{arXiv preprint arXiv:1908.03265}, 2019.

\bibitem[Martens(2020)]{martens2020new}
James Martens.
\newblock New insights and perspectives on the natural gradient method.
\newblock \emph{The Journal of Machine Learning Research}, 21\penalty0
  (1):\penalty0 5776--5851, 2020.

\bibitem[Melis et~al.(2019)Melis, Ko{\v{c}}isk{\`y}, and
  Blunsom]{melis2019mogrifier}
G{\'a}bor Melis, Tom{\'a}{\v{s}} Ko{\v{c}}isk{\`y}, and Phil Blunsom.
\newblock Mogrifier {LSTM}.
\newblock \emph{arXiv preprint arXiv:1909.01792}, 2019.

\bibitem[Merity et~al.(2017)Merity, Keskar, and Socher]{merity2017regularizing}
Stephen Merity, Nitish~Shirish Keskar, and Richard Socher.
\newblock Regularizing and optimizing {LSTM} language models.
\newblock \emph{arXiv preprint arXiv:1708.02182}, 2017.

\bibitem[Mikolov et~al.(2010)Mikolov, Karafi{\'a}t, Burget, Cernock{\`y}, and
  Khudanpur]{mikolov2010recurrent}
Tomas Mikolov, Martin Karafi{\'a}t, Lukas Burget, Jan Cernock{\`y}, and Sanjeev
  Khudanpur.
\newblock Recurrent neural network based language model.
\newblock In \emph{Interspeech}, volume~2, page~3, 2010.

\bibitem[Polyak and Juditsky(1992)]{polyak1992acceleration}
Boris~T Polyak and Anatoli~B Juditsky.
\newblock Acceleration of stochastic approximation by averaging.
\newblock \emph{SIAM journal on control and optimization}, 30\penalty0
  (4):\penalty0 838--855, 1992.

\bibitem[Rakhlin et~al.(2011)Rakhlin, Shamir, and Sridharan]{rakhlin2011making}
Alexander Rakhlin, Ohad Shamir, and Karthik Sridharan.
\newblock Making gradient descent optimal for strongly convex stochastic
  optimization.
\newblock \emph{arXiv preprint arXiv:1109.5647}, 2011.

\bibitem[Robbins and Monro(1985)]{robbins1985stochastic}
Herbert Robbins and Sutton Monro.
\newblock A stochastic approximation method.
\newblock In \emph{Herbert Robbins Selected Papers}, pages 102--109. Springer,
  1985.

\bibitem[Roux(2019)]{roux2019anytime}
Nicolas~Le Roux.
\newblock Anytime tail averaging, 2019.

\bibitem[Ruppert(1988)]{ruppert1988efficient}
David Ruppert.
\newblock Efficient estimations from a slowly convergent {R}obbins-{M}onro
  process.
\newblock Technical report, Cornell University Operations Research and
  Industrial Engineering, 1988.

\bibitem[Shamir and Zhang(2013)]{shamir2013stochastic}
Ohad Shamir and Tong Zhang.
\newblock Stochastic gradient descent for non-smooth optimization: Convergence
  results and optimal averaging schemes.
\newblock In \emph{International conference on machine learning}, pages 71--79.
  PMLR, 2013.

\bibitem[Yu et~al.(2020)Yu, Balasubramanian, Volgushev, and
  Erdogdu]{yu2020analysisv2}
Lu~Yu, Krishnakumar Balasubramanian, Stanislav Volgushev, and Murat~A Erdogdu.
\newblock An analysis of constant step size {SGD} in the non-convex regime:
  Asymptotic normality and bias.
\newblock \emph{arXiv preprint arXiv:2006.07904v2}, 2020.

\end{thebibliography}
  \bibliographystyle{plainnat}
  \ifdef{\groundskip}
        {\vskip\lastskip
         \addvspace{2\groundskip}}
        {}
}

\ifdef{\groundskip}{}{\clearpage}

\begin{appendices}

\crefalias{section}{appendix}

\section{Proofs for the Analysis of the Algorithm}
\label{sec:proofs-for-analysis}

First, we list a couple of basic properties then restate the propositions from \Cref{sec:tta-analysis} and provide proofs.

\begin{proposition}[Basic properties]
\label{prop:basics}
$\forall t \geq E\colon$ and $\forall n \geq E \colon$
\begin{enumerate}[(i)]
\item $E \divides S(n)$, $E \divides L(n)$\label{prop:multiples}
\item $S(n) \leq L(n) \leq n$\label{prop:slessthanl}
\item $f^S(n) > f^L(n)$\label{prop:shortworse}
\item $L(t) = S(t) + S'\!(\SP(t))$ \,if\, $\SP(t) \neq -1$ \,else\, $L(t) = S(t)$\label{prop:lassumofs}
\end{enumerate}
\end{proposition}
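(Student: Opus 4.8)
All four claims are bookkeeping about the algorithm's state $(S,\theta^S,L,\theta^L)$, and the single useful observation is that only two operations ever modify it: \texttt{add\_weights}, which increments both $S$ and $L$ by one and folds $\theta_t$ into the two running averages, and \texttt{switch}, which performs $S\gets 0$, $L\gets S'(\cdot)$, $\theta^L\gets\theta^{S'}(\cdot)$ with the post-\texttt{add\_weights} values. I would first record two preliminaries. (a) The incremental update in \texttt{add\_weights} correctly maintains $\theta^{S'}(t)=\avg(t,S'(t))$ and $\theta^{L'}(t)=\avg(t,L'(t))$ whenever the lengths are positive (a one-line induction using that both counters start at $0$ and both averages at $\mathbf{0}$), hence also $\theta^S(t)=\avg(t,S(t))$ and $\theta^L(t)=\avg(t,L(t))$ on iterations with no switch. (b) $t=E$ is a switch point: at the first evaluation $S'(E)=L'(E)=E$ and $\theta^{S'}(E)=\theta^{L'}(E)=\avg(E,E)$, so $F^S=F^L$, \Cref{algo:line:switch} fires, and therefore $S(E)=0$, $L(E)=E$. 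As assumed in the text, we work with the core algorithm and ignore the extensions.

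\textbf{Parts \ref{prop:multiples} and \ref{prop:slessthanl}.} For \ref{prop:multiples} I would induct over evaluation steps $n$. The base $n=E$ is preliminary (b). Going from $n$ to $n+E$, the $E$ intervening \texttt{add\_weights} calls add $E$ to each counter, preserving divisibility by $E$; the optional \texttt{switch} at $n+E$ either changes nothing or sets $S\gets 0$ and $L\gets S'(n+E)=S(n)+E$, still both multiples of $E$. For \ref{prop:slessthanl} I would maintain the invariant $0\le S(t)\le L(t)\le t$ by induction on all $t$: \texttt{add\_weights} raises $S$, $L$ and $t$ each by one and so preserves it, while \texttt{switch} replaces $(S,L)$ by $(0,S'(t))$, which is admissible since $0\le S'(t)\le L'(t)\le t$ held just before the switch.

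\textbf{Part \ref{prop:shortworse}.} Fix an evaluation step $n\ge E$ and split on whether \Cref{algo:line:switch} fired at $n$. If it fired, then $S(n)=0$, so by the convention $f^X(t)=+\infty$ when $X(t)=0$ we get $f^S(n)=+\infty>f^L(n)$, the latter finite because $L(n)=S'(n)=S(n-1)+1\ge 1$. If it did not fire, the core guard $F^S\le F^L$ is false, i.e.\ $F^S>F^L$, and since in that case none of $S,L,\theta^S,\theta^L$ is touched, preliminary (a) identifies $f^S(n)=f(\avg(n,S(n)))=F^S$ and $f^L(n)=f(\avg(n,L(n)))=F^L$, with $S(n)=S'(n)\ge 1$ so the convention is not invoked, giving $f^S(n)>f^L(n)$.

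\textbf{Part \ref{prop:lassumofs} and the main obstacle.} I would induct on $t$, using that between consecutive switch points $S$ and $L$ advance in lockstep: at the most recent switch point $p=\SP(t)$ the switch set $S=0$ and $L=S'(p)$, and every \texttt{add\_weights} after $p$ (there are $t-p$ of them) adds one to each, so $S(t)=t-p$ and $L(t)=S'(p)+(t-p)=S(t)+S'(\SP(t))$; at $t=p$ this reduces to $L(p)=S'(p)$. The residual case $\SP(t)=-1$ arises only for $t<E$, where no evaluation has occurred and $S(t)=L(t)=t$, so $L(t)=S(t)$. None of this is deep; the proof is purely a matter of propagating two counters through an induction whose step is a case split on whether the step is an evaluation step and whether it triggers a switch. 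The only places needing genuine care are the boundary at the first evaluation $t=E$, the interplay with the $+\infty$ and $\mathbf{0}$ conventions when a length is zero, and pinning down the value of $\SP(t)$ when $t$ is itself a switch point so that \ref{prop:lassumofs} reads correctly there.
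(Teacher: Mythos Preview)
Your proposal is correct and follows the same approach as the paper: both argue by tracking the effect of the two state-changing operations, \texttt{add\_weights} and \texttt{switch}, on the counters. The paper's own treatment is extremely terse---essentially a one-line remark per item (e.g., ``\ref{prop:shortworse} is because we switch if it is not true''; ``\ref{prop:lassumofs} expresses that all long averages except the first are continuations of the previous short average'')---whereas you spell out the inductions, the base case at $t=E$, and the $+\infty$ convention explicitly. Your closing caveat about the reading of $\SP(t)$ when $t$ is itself a switch point is well placed: with the paper's strict convention $\SP(t)<t$, identity \ref{prop:lassumofs} is most cleanly read at non-switch $t$ (or with $\SP$ taken inclusively), which is how it is actually invoked later in the proof of \Cref{prop:bounds}.
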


\Cref{prop:multiples} states that the averaging lengths are multiples of the evaluation period; \ref{prop:slessthanl} follows from that the lengths increase by 1 at every iteration except at switches, where $S$ is reset to $0$; \ref{prop:shortworse} is because we switch if it is not true; and \ref{prop:lassumofs} expresses that all long averages except the first are continuations of the previous short average.

\propbounds*

\begin{proof}
We prove $S(n) < \OL(n)$ by contradiction.
Suppose $\OL(n_0) \leq S(n_0)$ for some $n_0$.
As $\OL(n)$ is monotonically increasing and $S(n)$ increases by $E$, there exists $n\leq n_0$ such that $\OL^E(n)=S(n)$.
Since $S(n) \leq L(n)$ (by \ref{prop:slessthanl} of \Cref{prop:basics}), from \Cref{easy-opt2} and $\OL^E(n)=S(n) \leq L(n)$, we have that $f^S(n) \leq f^L(n)$, which contradicts \ref{prop:shortworse} of \Cref{prop:basics}.

Next, we prove $L(n) < 2\OL(n)+E$.
From \ref{prop:lassumofs} of \Cref{prop:basics}, we have that at the beginning, when there has not yet been a switch, $L(t) = S(t)$, else $L(t) = S(t) + S'\!(\SP(t))$ for all $t$.
In the first case, $L(n) = S(n) < \OL(n) < 2\OL(n)+E$, and we are done.

In the second, usual case, $L(n) = S(n) + S'\!(\SP(n))$.
That is, the length of the current long average is the sum of the lengths of the current and the previously finished short average.
Since $S(n) < \OL(n)$ and $S'\!(\SP(n)) = S(\SP(n)-E) + E$, so $L(n) < \OL(n) + S(\SP(n)-E) + E$, from which $L(n) < \OL(n) + \OL(\SP(n)-E) + E$.
Finally, from the monotonicity of $O$ in \Cref{monotone-opt}, $\OL(\SP(n)-E) \leq \OL(n)$, we get $L(n) < 2\OL(n) + E$.
\end{proof}

\propinfinite*

\begin{proof}
Because $S(n) < \OL(n)$ and $S(n)$ increases by $E$ between switch points, it must catch $\OL(n)$ at some step $n_s$ because $\OL(n)$ grows more slowly by \Cref{slow-opt}.
At the point where $S(n_s) = \OL^E(n_s)$, $f^S(n_s) \leq f^L(n_s)$ by \Cref{easy-opt2}, thus there must be a switch.
\end{proof}

\proponce*

\begin{proof}
Since $S(n) < \OL(n)$ and $E \divides S(n)$ for all $n$, so $S'\!(n_1) \leq \OL^E(n_1)$.
Then it is either that $S'\!(n_1) = \OL^E(n_1)$ or $S'\!(n_1) < \OL(n_1)$.
Since at switch points $L(n)=S'\!(n)$, in the former case, we conclude the proof with $L(n_1) = \OL^E(n_1)$.
Considering the latter case, $L(n_1) = S'\!(n_1) < \OL(n_1)$, so $L(n_1) \leq \OL_E(n_1)$.
Also, switches happen when $f^{S'\!\!}(n) \leq f^{L'\!\!}(n)$, but as per \Cref{easy-opt} this can happen only if $\OL(n) < L'\!(n)$.
Thus for $n_2$ to be a switch point, it must be that $\OL(n_2) < L'\!(n_2) = L(n_2-E) + E$, hence $\OL_E(n_2) \leq L(n_2-E)$.
Combining it with $L(n_1) \leq \OL_E(n_1)$, we get $L(n_1) \leq \OL_E(n_1) \leq \OL_E(n_2) \leq L(n_2-E)$.
Therefore, since $L(n)$ and $\OL_E(n)$ are monotonically increasing over $[n_1,n_2-E]$, both take values that are multiples of $E$, and $L$ overtakes $\OL_E$ while not skipping any such value, there must be a point $n$ where $L$ is equal to $\OL_E$.
\end{proof}

\section{Properties of 2TA in the Pure Optimization Setting}
\label{sec:properties-of-tta-in-pure-optimization}

Here, we prove the claim from \Cref{sec:tta-pure-optimization} that \tta{} converges to the optimal weights in the ordinary least squares regression setting, where the generalization and training losses are the same.

\begin{definition}[$N$th switch point]
For all $N \in \Nb$, we define three random variables:
\begin{itemize}
\item $Q_N \in [E, 2E, \dots]$ is the time step corresponding to the $N$th switch point.
\item $F_N = f(\theta^{S'}(Q_N))$ is the loss with the $N$th short average just before it becomes the long average.
\item $S_N = S'(Q_N)$ is the final length of the $N$th short average.
\end{itemize}
\end{definition}

From now on, we use $N$ to index switch points or to refer to short averages that end at that switch point.

\begin{proposition}
\label{prop:f-n-decreasing}
If the generalization loss function $f$ is convex, then $(F_N)_{N \in \Nb}$ is monotonically decreasing.
\end{proposition}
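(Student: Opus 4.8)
The plan is to combine convexity of $f$ with a single structural fact about the run of \Cref{alg:two-tailed-averaging-core}: at a switch, the new long average is exactly the uniform mean over the concatenation of the two most recent short-average windows. Fix $N \ge 1$ and set $Q_0 := 0$. Then, at the instant it ends, the $N$th short average is the mean of $\theta_{Q_{N-1}+1},\dots,\theta_{Q_N}$, of length $S_N = Q_N - Q_{N-1}$, and the $(N{+}1)$th short average (at $Q_{N+1}$) is the mean of $\theta_{Q_N+1},\dots,\theta_{Q_{N+1}}$, of length $S_{N+1}$.

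The first step I would carry out is to pin down the long average just before the $(N{+}1)$th switch. In the core algorithm the long average is reset only at switch points, so between $Q_N$ and $Q_{N+1}$ it is simply the running average that was set to the $N$th short average at step $Q_N$ and has since absorbed $\theta_{Q_N+1},\dots,\theta_{Q_{N+1}}$. Hence $\theta^{L'}(Q_{N+1})$ is the mean of $\theta_{Q_{N-1}+1},\dots,\theta_{Q_{N+1}}$ and $L'(Q_{N+1}) = S_N + S_{N+1}$ (cf.\ \ref{prop:lassumofs} of \Cref{prop:basics}). Splitting this mean along the two disjoint index blocks $[Q_{N-1}+1,Q_N]$ and $[Q_N+1,Q_{N+1}]$, of sizes $S_N$ and $S_{N+1}$, exhibits it as the convex combination
\[
\theta^{L'}(Q_{N+1}) = \frac{S_N}{S_N+S_{N+1}}\,\theta^{S'}(Q_N) + \frac{S_{N+1}}{S_N+S_{N+1}}\,\theta^{S'}(Q_{N+1}).
\]

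Next I would apply Jensen's inequality to this identity, using $F_N = f(\theta^{S'}(Q_N))$ and $F_{N+1} = f(\theta^{S'}(Q_{N+1}))$, to get $f\bigl(\theta^{L'}(Q_{N+1})\bigr) \le \frac{S_N}{S_N+S_{N+1}}F_N + \frac{S_{N+1}}{S_N+S_{N+1}}F_{N+1}$. Since $Q_{N+1}$ is by definition a switch point, the switch condition at that step gives $F_{N+1} = f(\theta^{S'}(Q_{N+1})) \le f(\theta^{L'}(Q_{N+1}))$. Chaining these two inequalities and cancelling the common $F_{N+1}$ term yields $\frac{S_N}{S_N+S_{N+1}}F_{N+1} \le \frac{S_N}{S_N+S_{N+1}}F_N$; since $S_N \ge E > 0$, this simplifies to $F_{N+1} \le F_N$, and as $N$ was arbitrary the sequence $(F_N)_{N\in\Nb}$ is monotonically decreasing.

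I expect the only delicate part to be the first step, namely tracking precisely which iterates the long average contains across a switch so as to justify the convex-combination identity; once that is in hand the rest is Jensen's inequality plus one line of algebra. It is worth noting that this proof uses none of the idealized assumptions of \Cref{sec:tta-analysis} and does not require the number of switch points to be infinite — it applies to whatever switch sequence the run actually produces — and that the boundary case $N=1$ is subsumed by the convention $Q_0 = 0$, for which $\theta^{S'}(Q_1)$ is the mean of $\theta_1,\dots,\theta_E$.
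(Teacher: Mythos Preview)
Your proof is correct and essentially identical to the paper's: both write $\theta^{L'}$ at a switch as the convex combination of the current and previous short averages, apply convexity (Jensen), invoke the switch condition $f(\theta^{S'})\le f(\theta^{L'})$, and cancel to obtain $F_{N+1}\le F_N$. The only differences are cosmetic---your $Q_N$ notation versus the paper's $\SP(n)$, and you cancel the $\tfrac{S_{N+1}}{S_N+S_{N+1}}F_{N+1}$ term where the paper cancels a $(1-\alpha)$ factor---but the argument is the same.
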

\begin{proof}
The long-averaged weights are a convex combination of the weights of the current and the previous short averages:
\begin{gather*}
\theta^{L'\!\!}(n) = \alpha \theta^{S'\!\!}(n) + (1-\alpha)\theta^{S'\!\!}(\SP(n))\\
\alpha = \frac{S'\!(n)}{S'\!(n) + S'\!(\SP(n))} \in (0,1).
\end{gather*}
Switching happens when $f(\theta^{S'}(n)) \leq f(\theta^{L'}(n))$.
Expanding $\theta^{L'\!\!}(n)$ and using that $f$ is convex, we get
\begin{align*}
f(\theta^{S'\!\!}(n))
&\leq f(\theta^{L'\!\!}(n)) \\
&= f(\alpha \theta^{S'\!\!}(n) + (1-\alpha)\theta^{S'\!\!}(\SP(n))) \\
&\leq \alpha f(\theta^{S'\!\!}(n)) + (1-\alpha)f(\theta^{S'\!\!}(\SP(n))),
\end{align*}
from which, $(1-\alpha)f(\theta^{S'\!\!}(n)) \leq (1-\alpha)f(\theta^{S'\!\!}(\SP(n)))$.
Using $\alpha < 1$, we get $f(\theta^{S'\!\!}(n)) \leq f(\theta^{S'\!\!}(\SP(n)))$.
This is true at all switch points, hence $F_{N+1} \leq F_N$ for all $N$.
\end{proof}

Note that in non-convex settings, the above monotonicity property could be enforced also by changing the switching condition to $f(\theta^{S'\!\!}(n)) \leq \min(f(\theta^{L'\!\!}(n)),\allowbreak f(\theta^{S'\!\!}(\SP(n)))$.
However, this would make the algorithm less able to adapt to violations of \Cref{monotone-opt}.

\begin{proposition}
\label{prop:l-to-infinity}
Assume that the loss function is bounded from below, the $(F_N)_{N \in \Nb}$ sequence monotonically decreases, and that $(\theta_t)_{t \in \natnumzero}$ approaches a stationary distribution with a density.
Then, $L(t) \pto \infty$.
\end{proposition}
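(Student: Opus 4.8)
The plan is to argue by contradiction: suppose $L(t)$ does not tend to infinity. Since $L(t)$ only ever decreases at switch points (where the new long average inherits the length $S'$ of the preceding short average, which is at least $E$) and otherwise increases by $E$ each evaluation, $L(t)$ is bounded above if and only if the lengths $S_N = S'(Q_N)$ of the successive short averages are bounded, and a switch happens infinitely often with these bounded lengths. So assume $\limsup_N S_N \le M < \infty$. Along a subsequence, the terminal short-average length is some fixed value $k$ (a multiple of $E$, with $k \le M$) infinitely often. The idea is that a short average of a \emph{fixed} finite length $k$, computed from iterates drawn (in the limit) from a fixed stationary distribution with a density, is itself a random variable with a nondegenerate distribution; and the switching condition $F_N = f(\theta^{S'}(Q_N)) \le f(\theta^{L'}(Q_N))$ forces $F_N$ to realize values arbitrarily close to the essential infimum of $f$ over the support of that distribution.

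Concretely, here are the steps. First, use the $(F_N)$-monotonicity hypothesis together with the lower bound on $f$ to conclude $F_N \downarrow F_\infty$ for some finite $F_\infty$. Second, observe that for the infinitely many switch points $Q_N$ with terminal short length equal to the fixed value $k$, the weights $\theta^{S'}(Q_N)$ are averages of $k/E$ successive "blocks" of iterates; as $t \to \infty$ these iterate blocks converge in distribution to blocks drawn from the stationary law, so $\theta^{S'}(Q_N)$ converges in distribution to a fixed law $\mu_k$ (the law of an average of $k$ consecutive stationary iterates), and hence $F_N = f(\theta^{S'}(Q_N))$ converges in distribution to $f_\#\mu_k$. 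But $F_N \to F_\infty$ deterministically, so $f_\#\mu_k = \delta_{F_\infty}$, i.e. $f$ is $\mu_k$-a.s. equal to the constant $F_\infty$. Third, derive the contradiction: because the stationary distribution has a density, $\mu_k$ has full support on an open set (the $k$-fold Minkowski-average of that set), so $f \equiv F_\infty$ on an open set; combined with the switching inequality $F_N \le f(\theta^{L'}(Q_N))$ and the fact that the long averages at those switch points also lie (eventually, in distribution) in that open set, one shows $f$ must attain values strictly below $F_\infty$ somewhere that the process visits with positive probability, contradicting $F_\infty = \inf_N F_N$. Equivalently and more cleanly: if $L(t)$ stayed bounded, the short average would never "catch up" in the sense used in Proposition~\ref{prop:infinite-switch-points}'s spirit — a fixed-length average of stationary noisy iterates cannot keep beating longer and longer averages of the same iterates once those longer averages have themselves equilibrated — so eventually no switch occurs, contradicting the assumption that switches keep happening with bounded length.

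The main obstacle I expect is making the distributional convergence rigorous and extracting an almost-sure/pointwise contradiction from it. The hypothesis gives convergence of $(\theta_t)$ to a stationary distribution, but the switch times $Q_N$ are themselves random and depend on the history, so one must be careful that conditioning on "$Q_N$ is a switch point with terminal length $k$" does not distort the limiting law of the averaged weights — this likely needs the density assumption (to rule out atoms and guarantee the switching events do not pin the iterates to a measure-zero set) and perhaps a mixing-type consequence of approaching a stationary distribution. A cleaner route, which I would try first, is to avoid distributional bookkeeping entirely: fix the bounded-$L$ scenario, note all relevant averages are eventually averages over windows of bounded length within a segment where the iterate law is within $\epsilon$ of stationary, and show directly that for $\epsilon$ small the event "a length-$k$ average beats the current long average of length $\ge k$" has probability bounded away from $1$ uniformly, so by Borel–Cantelli-type reasoning switches cannot recur infinitely often with bounded length — forcing $L(t) \to \infty$.
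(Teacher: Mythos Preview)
Your overall strategy---contradiction, pass from $L(t)$ to the short-average lengths $S_N$, exploit monotone convergence of $F_N$, and use the density of the stationary law to rule out ``too many'' short averages of a fixed finite length---is the same as the paper's. But two of your steps do not go through as written.

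\textbf{The negation of $L(t)\pto\infty$.} You reduce to ``$\limsup_N S_N\le M<\infty$'', i.e.\ almost-sure boundedness. That is not the negation of convergence in probability to infinity. The correct negation is: there exist $l_0\in\Nb$ and $\epsilon_0>0$ such that $P(S_N\le l_0)>\epsilon_0$ for infinitely many $N$ (and hence, by pigeonhole over the finitely many multiples of $E$ below $l_0$, $P(S_N=l_0)>\epsilon_0$ for infinitely many $N$ after adjusting $l_0,\epsilon_0$). The paper works with exactly this probabilistic statement; your sample-path boundedness assumption is strictly stronger and does not follow from the hypothesis you are contradicting.

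\textbf{The contradiction in step three does not land.} From ``$f_\#\mu_k=\delta_{F_\infty}$'' you conclude $f\equiv F_\infty$ on an open set and then try to argue this forces $f$ to take values strictly below $F_\infty$ somewhere the process visits---but nothing in the hypotheses excludes $f$ being flat on an open region, and your subsequent clause about the long averages does not supply the missing inequality. The paper avoids this by never trying to pin down where $f$ is constant. Instead it introduces the \emph{unconditional} random variable $F^{l_0+}_N$ (the loss the $N$th short average would have at length $l_0$, whether or not a switch occurred earlier), observes that $S_N=l_0$ implies $F_N=F^{l_0+}_N$, and uses the monotonicity sandwich $F^*\le F_N\le F_{N-1}$ to trap $F^{l_0+}_N$ in the interval $[F^*,F_{N-1}]$ on that event. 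Since $F_{N-1}-F^*\to 0$ in probability and $F^{l_0+}_N$ inherits a limiting density from the stationary iterates, $P(F^{l_0+}_N\in[F^*,F_{N-1}])\to 0$, contradicting $P(S_N=l_0)>\epsilon_0$ infinitely often. This shrinking-interval argument is the missing idea in your proposal; it both sidesteps the conditioning-on-$Q_N$ issue you flagged (because $F^{l_0+}_N$ is defined without reference to whether a switch occurs) and delivers the contradiction directly from the density assumption without any claim about $f$ being locally constant.

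Your Borel--Cantelli alternative is closer in spirit, but ``the event has probability bounded away from $1$'' is not enough for Borel--Cantelli; you would need summability or independence, neither of which is available here.
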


\begin{proof}
First, we prove $S_N \pto \infty$ by contradiction.
Assume that $S_N \centernot{\pto} \infty$.
\begin{enumerate}[(i)]
\item \label{convergence-proof-step:indirection}
\emph{For some $l_0 \in \Nb$ and $\epsilon_0 > 0$, there are infinitely many $N^+ \in \Nb$ such that $P(S_{N^+} = l_0) > \epsilon_0$.}\\
\emph{Proof.}
By the definition of convergence in probability, $S_n \pto \infty$ is equivalent to $\forall l \in \Nb \colon \lim_{N \to \infty} P(S_N \leq l) = 0$.
Suppose that is false, hence $\exists l \in \Nb, \epsilon > 0 \colon \forall N \in \Nb \colon \exists N^+ > N \colon P(S_{N^+} \leq l) > \epsilon$.
Then, we have an infinite number of short averages $N^+$ that are at most length $l$ with at least $\epsilon$ probability: $P(S_{N^+} \leq l) > \epsilon$.
Since $l$ is finite, for all such $N^+$, there exists $l_{N^+} \in \Nb$ such that $P(S_{N^+}=l_{N^+})>\epsilon/l$.
Hence, there must be at least one $l_0 \leq l$ and $\epsilon_0 > 0$ such that $P(S_{N^+} = l_0) > \epsilon_0$ for infinitely many $N^+$.

\item \label{convergence-proof-step:loss-convergence}
\emph{The final losses of the short averages converge in probability: $F_N \pto F^*$.}\\
\emph{Proof.}
From the assumption that the loss function is bounded from below and that all realizations of the $F_N$ sequence decrease monotonically, all realizations must converge, which implies almost sure convergence hence convergence in probability.

\item \label{convergence-proof-step:contradiction}
\emph{Let $F^{l_0+}_N$ denote what the loss of $N$th short average at length $l_0$ would be if the algorithm were modified to perform no switching for this short average only.
Then, $P(F^* \leq F^{l_0+}_N \leq F_{N-1}) \to 0$.}\\
\emph{Proof.}
We have assumed that iterates converge to a stationary distribution with a density.
Note that this rules out convergence in the strict sense, which would require a zero-variance stationary distribution.
For any random variable $X$ with a density, $lim_{\delta \to 0} P(a \leq X \leq a+\delta) = 0$ for all $a \in \Rb$.
By \ref{convergence-proof-step:loss-convergence}, $F_N \pto F^*$, so for all $\epsilon > 0$, $P(F_N-F^* < \epsilon)$ is close to $1$ for all large enough $N$.
With $F_{N-1}-F^*$, the size of the interval into which $F^{l_0+}_N$ must fit, thus bounded uniformly in probability, we get $P(F^* \leq \smash{F^{l_0+}_N} \leq F_{N-1}) \to 0$.
\end{enumerate}

We assumed that $S_N \centernot{\pto} \infty$ and in \ref{convergence-proof-step:indirection} showed that $P(S_{N^+} = l_0) > \epsilon_0$ for some $l_0 \in \Nb$, $\epsilon_0 > 0$ and infinitely many $N^+$.
Since $S_N = l_0$ implies $F_N = F^{l_0+}_N$ for any $N$, we have that $P(F_{N^+} = \smash{F^{l_0+}_{N^+}}) > \epsilon_0$.
However, due to the monotonicity assumption, $F^* \leq F_N \leq F_{N-1}$ for all $N$, hence $P(F^* \leq \smash{F^{l_0+}_{N^+}} \leq F_{N^+-1}) > \epsilon_0$, which contradicts $P(\smash{F^*} \leq \smash{F^{l_0+}_N} \leq F_{N-1}) \to 0$ from \ref{convergence-proof-step:contradiction}.

Finally, every long average except the first is a continuation of the previous short average, that is, for all $t \geq E$, $L(t) \geq S'(\SP(t))$ and $S'(\SP(t)) = S_N$ for some $N$.
Therefore, $S_N \pto \infty$ implies that $L(t) \pto \infty$.
\end{proof}

\begin{proposition}
Consider applying SGD with a constant learning rate to an ordinary least squares problem with unique minimum point $\theta^\star$.
Then, for a sufficiently low learning rate, $\theta^L(t) \pto \theta^\star$.
\end{proposition}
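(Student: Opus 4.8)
The plan is to reduce the claim to \Cref{prop:l-to-infinity} and then show that a running average over a window whose length grows to infinity must land at $\theta^\star$. For ordinary least squares the loss is the quadratic $f(\theta)=f(\theta^\star)+\tfrac12(\theta-\theta^\star)^{\top}H(\theta-\theta^\star)$ with Hessian $H=\E[xx^{\top}]$, and ``unique minimum point'' is exactly $H\succ0$; hence $f$ is convex and bounded below by $f(\theta^\star)$, so by \Cref{prop:f-n-decreasing} the sequence $(F_N)$ is monotonically decreasing. The iterates obey the linear recursion $\theta_t-\theta^\star=(I-\gamma x_tx_t^{\top})(\theta_{t-1}-\theta^\star)+\gamma\xi_tx_t$, which is a homogeneous Markov chain; for a sufficiently small learning rate $\gamma$ it is a contraction in $L^2$ and, under mild regularity of the design distribution, converges geometrically to a unique stationary law, which has a density because of the additive noise term. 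The hypotheses of \Cref{prop:l-to-infinity} are therefore met, so $L(t)\pto\infty$.

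Next I would establish a window-uniform averaging bound. Since $\theta^L(t)=\avg(t,L(t))$, it is enough to control $\avg(t,\Delta)$ for all admissible $(t,\Delta)$. Taking expectations in the recursion yields $\|\E[\theta_i-\theta^\star]\|\le\rho^i\|\theta_0-\theta^\star\|$ with $\rho=\|I-\gamma H\|<1$, so the bias of $\avg(t,\Delta)$ is $O(1/\Delta)$ uniformly in $t$; and geometric ergodicity makes the autocovariances of $(\theta_i)$ geometrically summable, so $\E\|\avg(t,\Delta)-\E[\avg(t,\Delta)]\|^2=O(1/\Delta)$ uniformly in $t$. Combining, there is a constant $C<\infty$ with $\E\|\avg(t,\Delta)-\theta^\star\|^2\le C/\Delta$ for all $t\ge\Delta\ge1$.

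Because $L(t)$ is selected adaptively from the entire trajectory, this second-moment bound does not transfer to $\avg(t,L(t))$ by itself, and I would upgrade it to a maximal bound. Introducing the reverse-time partial sums $W_\Delta=\sum_{j=1}^{\Delta}(\theta_{t-j+1}-\theta^\star)$, so that $\avg(t,\Delta)-\theta^\star=W_\Delta/\Delta$, splitting the range of $\Delta$ into dyadic blocks, and applying a maximal inequality for the (geometrically mixing) summands (which costs at most a logarithmic factor), one obtains
\[
\lim_{L_0\to\infty}\ \sup_{t}\ P\!\Big(\max_{L_0\le\Delta\le t}\|\avg(t,\Delta)-\theta^\star\|>\epsilon\Big)=0\qquad(\forall\,\epsilon>0).
\]
Given $\delta>0$, pick $L_0$ so that the left-hand side is below $\delta/2$ for every $t$; then $P(\|\theta^L(t)-\theta^\star\|>\epsilon)\le P(L(t)<L_0)+\delta/2<\delta$ for all large $t$, since $L(t)\pto\infty$. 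Hence $\theta^L(t)\pto\theta^\star$.

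I expect this last upgrade to be the main obstacle: converting the per-window $C/\Delta$ estimate into a bound uniform over all suffix windows at once, so that it survives the trajectory-dependent choice of $L(t)$. This needs a maximal inequality for weakly dependent sequences rather than a plain variance estimate; the ingredients it rests on (geometric ergodicity of constant-step SGD on a least-squares objective and the consequent $C/\Delta$ control) are classical, and this is where the ``sufficiently low learning rate'' hypothesis is consumed.
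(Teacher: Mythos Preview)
Your argument is correct and its first half coincides with the paper's: verify convexity and lower-boundedness of $f$, invoke \Cref{prop:f-n-decreasing} for monotonicity of $(F_N)$, argue that constant-step SGD on least squares has a unique stationary law, and conclude $L(t)\pto\infty$ via \Cref{prop:l-to-infinity}. The paper does exactly this, citing \citet{yu2020analysisv2} for the stationary-distribution step rather than sketching the contraction argument directly.

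The second half is where you diverge. The paper disposes of the passage from $L(t)\pto\infty$ to $\theta^L(t)\pto\theta^\star$ in one sentence: it cites the Tail Averaging convergence result of \citet{jain2018parallelizing} and states that, since $\theta^L(t)$ is a tail average, the conclusion follows. You instead make explicit the issue the paper leaves implicit, namely that $L(t)$ is a trajectory-dependent random length, so a per-window bound $\E\|\avg(t,\Delta)-\theta^\star\|^2\le C/\Delta$ does not transfer to $\avg(t,L(t))$ without a uniformisation step. Your remedy---a dyadic decomposition together with a maximal inequality for geometrically mixing partial sums, yielding $\sup_t P(\max_{\Delta\ge L_0}\|\avg(t,\Delta)-\theta^\star\|>\epsilon)\to 0$---is the right tool and closes the gap cleanly. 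What you gain is a self-contained argument that is robust to the adaptivity of $L(t)$; what the paper gains is brevity, at the cost of leaving the reader to reconcile a fixed-tail theorem with a random tail. Both routes consume the small-learning-rate hypothesis in the same place (geometric ergodicity plus the $C/\Delta$ variance control).
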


\begin{proof}
The loss function is convex, so $F_N$ is monotonically decreasing by \Cref{prop:f-n-decreasing}.
It is also bounded from below, and it satisfies Assumptions 2.1-2.3 of \citet{yu2020analysisv2}, hence --~by Proposition 2 therein~-- SGD iterates admit a unique stationary distribution for an appropriately bounded learning rate.
Thus, appealing to \Cref{prop:l-to-infinity}, we have that $L(t) \pto \infty$.
In the ordinary least squares regression setting, \citet{jain2018parallelizing} prove that Tail Averaging converges to the optimum with an appropriately bounded learning rate.
Hence, by choosing a learning rate that satisfies both bounds and leveraging the fact that $\theta^L(t)$ is a tail average, we get $\theta^L(t) \pto \theta^\star$.
\end{proof}

Paralleling strict convergence results for Polyak and Tail Averaging, we have proved that \tta{} converges in probability to the optimum in the ordinary least squares regression setting when $f$ is the training loss.
We stress again that \tta{} is not intended for pure optimization, and this result is to serve as a characterization of behaviour in the infinite data case.

With pure optimization very much a secondary consideration, we provide only weak, anecdotal support for the rate of convergence: the length of the long average tended to increase exponentially in all experiments described in \Cref{sec:tta-experiments} and also on simple synthetic data.
Intuitively, this is to be expected when $f$ is locally convex because at stationarity, every time a short average finishes at length $l$, it halves the probability mass available for subsequent short averages to finish at that length: $P(\smash{F^l_{N+1}} \leq F_N \mid S_{N+1} \geq l,\ S_N = l) \to 0.5 P(\smash{F^l_N} \leq F_{N-1} \mid S_N \geq l)$.
This halving effect is strongest at the same length, but in diminished form, it extends to longer averages due to the similarity of their distributions.

\end{appendices}

\end{document}